\def\isarxiv{1}
\definecolor{b2}{RGB}{51,153,255}
\definecolor{mygreen}{RGB}{80,180,0}
\theoremstyle{plain}
\newtheorem{theorem}{Theorem}[section]
\newtheorem{lemma}[theorem]{Lemma}
\newtheorem{fact}[theorem]{Fact}
\newtheorem{corollary}[theorem]{Corollary}
\theoremstyle{definition}
\newtheorem{definition}[theorem]{Definition}
\newcommand{\wh}{\widehat}
\renewcommand{\epsilon}{\varepsilon}
\renewcommand{\phi}{\varphi}
\newcommand{\N}{\mathbb{N}}
\newcommand{\R}{\mathbb{R}}
\renewcommand{\hat}{\wh}
\newcommand{\E}{\mathbb{E}}
\newcommand{\Var}{\mathrm{Var}}
\newcommand*{\RN}[1]{\expandafter\@slowromancap\romannumeral #1@}
\newcommand{\anng}{\mathsf{ANN}\text{-}\mathsf{Graph}} 
\newcommand{\vol}{\mathrm{Vol}}
\newcommand{\nns}{\mathsf{NN}\text{-}\mathsf{Search}}
\title{A Theoretical Analysis of Nearest Neighbor Search on Approximate Near Neighbor Graph}
\begin{document}

\ifdefined\isarxiv
\title{A Theoretical Analysis Of Nearest Neighbor Search On Approximate Near Neighbor Graph}

\author{
Anshumali Shrivastava\thanks{\texttt{anshumali@rice.edu}. Rice University.}
\and
Zhao Song\thanks{\texttt{zsong@adobe.com}. Adobe Research.}
\and
Zhaozhuo Xu\thanks{\texttt{zx22@rice.edu}. Rice University.}
}

\date{}

\begin{titlepage}
    \maketitle
    \begin{abstract}
        
Graph-based algorithms have demonstrated state-of-the-art performance in the nearest neighbor search ($\nns$) problem. These empirical successes urge the need for theoretical results that guarantee the search quality and efficiency of these algorithms. However, there exists a practice-to-theory gap in the graph-based $\nns$ algorithms. Current theoretical literature focuses on greedy search on exact near neighbor graph while practitioners use approximate near neighbor graph ($\anng$) to reduce the preprocessing time. This work bridges this gap by presenting the theoretical guarantees of solving $\nns$ via greedy search on $\anng$ for low dimensional and dense vectors. To build this bridge, we leverage several novel tools from computational geometry. Our results provide quantification of the trade-offs associated with the approximation while building a near neighbor graph. We hope our results will open the door for more provable efficient graph-based $\nns$ algorithms.

    \end{abstract}
    \thispagestyle{empty}
\end{titlepage}
\else
\fi

\section{Introduction}
In this paper, we study the nearest neighbor search ($\nns$) problem. Let $P$ denotes an $n$-vector, $d$-dimensional dataset. Given a query vector $q \in R^d$, the $\nns$ aims at retrieving a vector $p$ from $P$ that has the minimum distance with $q$. Here the distance could be Euclidean distance, cosine distance, or negative inner product.  $\nns$ is a fundamental problem in machine learning. $\nns$ is the building block of the well-known k-nearest neighbor algorithm~\citep{fh89,a92}, which has wide applications in computer vision~\citep{sdi06}, language processing~\citep{klj+19} and recommendation systems~\citep{skk+01}. Moreover, recent research suggests that $\nns$ algorithms could be used to scale up the training of deep neural networks~\citep{cmf+20,clp+20}. 

Graph-based algorithms have achieved state-of-the-art performance in $\nns$~\citep{has+11,fxw+17,my18,tzx+19,ztx+19,ztl20,txz+21}. These algorithms first preprocess the dataset $P$ into a directed graph $G$, where each vector in the dataset is represented by a vertex. An edge exists from vertex $p_1$ to vertex $p_2$ indicates that $p_2$ is close to $p_1$ in terms of some distance measure. In this situation, $p_2$ also belongs to the out-neighbors set of $p_1$.   Next, given a query vector $q$, starting from an arbitrary vertex $p$, the graph-based $\nns$ algorithms perform a greedy search on the graph: find the vertex $p'$ from the out-neighbors of $p$ that has the minimum distance with $q$, if $p'$ is closer to $q$ than $p$, set $p'$ to be the new $p$ and repeat this process. The major intuition for this greedy search is the six degrees of separation~\citep{nb06} that any two people in the world could be connected with a maximum of six friends of a friend steps. Following this idea, graph-based algorithms could potentially reduce the complexity of $\nns$ to sub-linear in the number of vectors in the dataset. We should also note that preprocessing the dataset $P$ is required in graph-based $\nns$ algorithms. However, this preprocessing is tolerable in real-world scenarios. For instance, in recommendation systems~\citep{fgz+19,ah21}, $P$ is the set of item embeddings, and the query $q$ represents a user embedding. Due to the massive amount of users and their frequent queries, the cumulative time savings introduced by graph-based $\nns$ algorithms would exceed the 
cost in preprocessing.

The practical success of graph-based $\nns$ algorithms urges the development of theory. While other $\nns$ algorithms such as hashing~\citep{ar15,alrw17} and quantization~\citep{jds10} are associated with well-established guarantees in space-time trade-offs, the theory of graph-based $\nns$ algorithms is still in an early stage. One major practice-to-theory gap is that: recent theoretical analysis~\citep{l18,ps20} focuses on providing space and time guarantees for greedy search on the exact near neighbor graph. In this graph $G$ for an $n$-vector dataset $P$, the out-neighbors of any vertex $p\in P$ contains all the datapoints that have distances smaller than a threshold with $p$. However, in many applications, practitioners reduce the preprocessing time or space by approximating the exact near neighbor graph~\citep{cfs09,dml11,zhg+13,fxw+17,my18,bbm18,ztx+19,ztl20,txz+21}. As a result, the approximate near neighbor graph ($\anng$) has the following propriety: for any vertex $p\in P$, every vertex $p'\in P$ with distance smaller than the threshold is not guaranteed to be $p$'s out-neighbors. It remains unknown how to provide the trade-offs for greedy search on the $\anng$.

\subsection{Our Contributions}
In this paper, we fill this practice-to-theory gap by providing the trade-offs for greedy style $\nns$ algorithms on $\anng$. We take $\nns$ with angular distance as an example and present our main results as below:

\begin{theorem}[An informal version of Theorem~\ref{thm:main:formal}]\label{thm:main:informal}
Let $\alpha\in (0,1)$ and $\delta\in (0,1)$. Let $P$ denote a $n$-vector dataset on a unit sphere $\mathbb{S}^{d-1}$. Let directed graph $G$ denote an exact near neighbor graph where each vertex is a vector $p\in P$. Moreover, an edge from $p_1$ to $p_2$ exists on $G$ if and only if $\langle p_1,p_2 \rangle \geq \alpha$. Let ${\cal A}$ denote a greedy search algorithm on $G$ that solves the nearest neighbor search problem. 

If we approximate $G$ by graph $\hat{G}$ such that any edge exists on $G$ would also exist on $\hat{G}$ with probability at least $\delta$, then we show that, compared to performing ${\cal A}$ on $G$,  performing ${\cal A}$ on $\hat{G}$ solves the nearest neighbor search problem using same query time and space but the failure probability would be raised to the power of $\Theta(\delta)$.
\end{theorem}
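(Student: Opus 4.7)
The plan is to couple the execution of $\mathcal{A}$ on $\hat{G}$ with its hypothetical execution on $G$ and control the additional failure probability introduced by the independent $\delta$-retention of edges.

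First I would unpack the correctness analysis of $\mathcal{A}$ on the exact graph $G$. Standard arguments for greedy search on near-neighbor graphs show that at every intermediate vertex $p$ on the greedy path there is a well-defined subset $S(p)$ of out-neighbors of $p$ in $G$ that are strictly closer to the query $q$, and the success of the search decomposes into the event that some element of $S(p)$ is available at each step along a path of length $T$. Consequently the failure probability on $G$ takes the form of a product (or sum) of per-step events whose probabilities factor through $|S(p)|$; under the unit-sphere assumption with inner-product threshold $\alpha$, $|S(p)|$ is large and the per-step failure is small.

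Next I would translate this bound to $\hat{G}$. Since each edge of $G$ is present in $\hat{G}$ independently with probability at least $\delta$, the event that \emph{none} of the edges from $p$ into $S(p)$ survives has probability at most $(1-\delta)^{|S(p)|}$. Comparing with the $G$-analysis, each per-step failure probability is softened by a factor $\delta$ in the exponent: a per-step bound of the form $e^{-c|S(p)|}$ on $G$ becomes $e^{-c\delta |S(p)|}$ on $\hat{G}$. Chaining these per-step bounds along the at most $T$ steps of the coupled path then yields an overall failure probability for $\mathcal{A}$ on $\hat{G}$ that is the $G$-failure raised to the $\Theta(\delta)$ power. The query time and space are unchanged because $\hat{G}$ is a subgraph of $G$ (so each greedy step inspects no more out-neighbors than on $G$), while the per-step geometric progress is preserved.

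The main obstacle is arguing that the greedy path traced by $\mathcal{A}$ on $\hat{G}$ has essentially the same length as the one on $G$. Because some edges are missing, the greedy choice in $\hat{G}$ may be strictly worse than the one in $G$, and such suboptimalities could in principle accumulate over many steps and destroy the $\log n$-type path-length bound. I would address this by exploiting the low-dimensional, dense geometry of $P\subset \mathbb{S}^{d-1}$ to lower bound $|S(p)|$ by a quantity large enough that, even after $\delta$-sampling, $\hat{G}$ retains sufficiently many progress-making edges to guarantee per-step geometric progress comparable to that in $G$. This keeps the search path length within an $O(1)$ factor of the $G$-path and thereby matches the query-time and space guarantees asserted in the theorem.
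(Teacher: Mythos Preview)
Your proposal is correct and matches the paper's approach: lower-bound the per-step probability that some progress-making neighbor survives in $\hat{G}$ (the paper does this via an explicit wedge-volume estimate, obtaining $1-O(e^{-s^{d}\delta/\sqrt{d}})$ by combining the random placement of $P$ with the $\delta$-retention in a single calculation rather than conditioning on $|S(p)|$), then union-bound over the $T$ steps. Your coupling framing and the path-length ``obstacle'' are unnecessary, since the paper never compares the two greedy paths but simply shows each step makes a fixed $\varepsilon\cdot 2^{-\omega}$ progress whenever any good neighbor exists, giving $T=O(2^{\omega}/\varepsilon)$ independently of $\delta$.
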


 Our theorem provides the trade-off between query time and failure probability when we approximate the near neighbor graph with a probability factor $\delta$.
 Specifically, as shown in Figure~\ref{fig:main}, our theorem contains three cases of $\anng$:
 \begin{itemize}
     \item $\anng$ with Adaptive Sampling (see Figure~\ref{fig:main}, middle left): for any vertex $p\in P$, its neighbors $q$ be sampled with probability monotonically decrease over the angle between $p$ and $q$
     \item Randomized $\anng$ (see Figure~\ref{fig:main}, middle right): Any edge exists on $G$ would also exist on $\hat{G}$ with probability $\delta$. This case corresponds to  where a vertex $p\in P$ compute with a uniform sampled subset of $P$ and determine the out-neighbors
     \item Randomized $\anng$ with random edges~\cite{jkl+20} (see Figure~\ref{fig:main}, right): we add random out-neighbors to the vertex on $\hat{G}$ in case (2) to improve the connectivity of the graph
 \end{itemize}

 From the theorem, we observe that as $\delta$ decreases, we would maintain query time but a larger failure probability. This is because the number of true out-neighbors on the graph decreases as $\delta$ decreases. As a result, we would have a larger probability of failure due to the absence of important edges.

\begin{figure*}[t]
  \centering
    \includegraphics[width=0.7\linewidth]{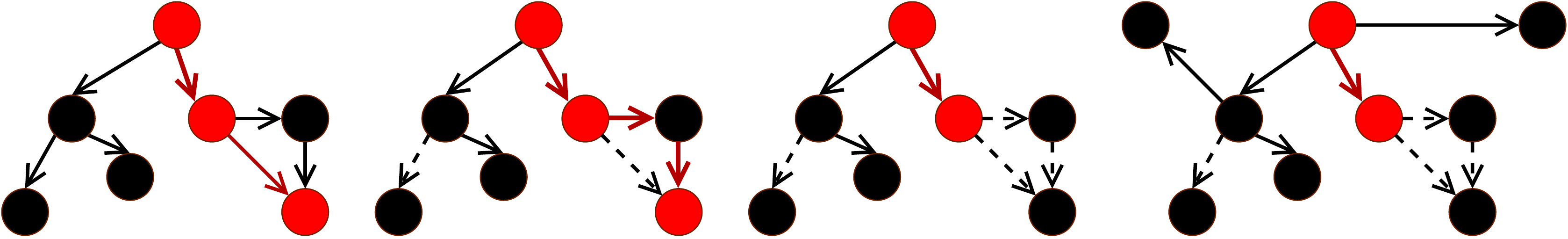}
    \caption{Illustration of our results. The black arrow represents the edge on the graph. The dotted black arrow represents the edges that disappears on the $\anng$. The red arrow denote the greedy search direction. Left: greedy search on original graph. Middle left: if index by adaptive sampling, closer node have higher probability to be connected on graph. Therefore, we need to pay more search steps in query. Middle right: random sampling of nodes when indexing the graph cause the absence of the edges and higher probability. Right: add random edges on the graph will increase the search complexity.}\label{fig:main}
\end{figure*} 
\section{Related Work}
Graph-based algorithms~\citep{fxw+17,my18,ztx+19,ztl20,txz+21} have achieved state-of-the-art performance in $\nns$ benchmarks~\citep{abf17}. Compared to hashing~\citep{dii+04}, quantization~\citep{jds10} and tree-based approaches~\citep{rp92}, graph-based $\nns$ algorithms significantly improve the search efficiency when we would like to obtain high recall of the nearest neighbor.  The empirical results indicate that greedy search on the near neighbor graph reduces the time complexity to reach the nearest neighbor. 

The theoretical foundation of graph-based $\nns$ algorithms is a challenging topic. \cite{l18} takes a step towards an asymptotic analysis of greedy search on the exact near neighbor graph in high dimensional settings. The proof in \citep{l18} indicates that graph-based $\nns$ algorithms achieves comparable performance with the optimal hashing-based $\nns$ algorithms~\citep{ar15,alrw17} in terms of space-time trade-offs. \cite{ps20} extends the theoretical analysis in \citep{l18}  to the dense vectors.
\cite{ps20} also discusses several modifications in the greedy search algorithm on the graph. However, these related works restrict the discussion to the exact near neighbor graph.
As the $\anng$ is widely used in practice to reduce the space complexity of storing the graph, the current theoretical analysis does not provide insights for the development of new graph-based $\nns$ algorithms.

\section{Preliminary}
In this section, we introduce the preliminary knowledge for our analysis on the $\anng$ based $\nns$ algorithm. We start by presenting the basic notations for this paper. Next, we introduce our definition of $\nns$, including the assumptions over the relationship between dataset size and dimensionality. Finally, we introduce the tools from geometry that we use in this paper.

\subsection{Notations}
We use $\Pr[]$ and $\E[]$ to denote probability and expectation. We use $\Var[]$ to denote the variance. We use $\max\{a,b\}$ to denote the maximum between $a$ and $b$. We use $\min \{a,b\}$ (resp. $\max\{a,b\}$) to denote the minimum (reps. maximum) between $a$ and $b$. For a vector $x$, we use $\| x \|_2 := ( \sum_{i=1}^n x_i^2 )^{1/2}$ to denote its $\ell_2$ norm. We use ${\cal U}(X)$ to denote the uniform distribution over set $X\subset \R^d$.  Let $B(n,p)$ denote a Binomial distribution with $n$ independent Bernoulli trials and each trial has success probability $p$. For two vector $x,y$, we use $\theta_{x,y}\in [0,\pi]$ to denote their angle. We use $\sin \theta_{x,y}$ to denote the sine function for angle $\theta_{x,y}$.  We use  $\mathbb{S}^{d-1}$  to denote a unit sphere in $\R^d$. For $x,y \in \R$, we denote $x \lesssim y$ if $x\leq Cy$ for some constant $C>0$. In this paper, we measure the time and space complexity in real RAM.

\subsection{Nearest Neighbor Search in Dense Regime}
In our work, we study the nearest neighbor search in a dense regime. We start with defining the dense dataset with parameter $\omega$. This definition is standard in the analysis of graph-based $\nns$ algorithms~\citep{l18,ps20}.

\begin{definition}[Dense dataset]\label{def:dense_dataset}
Let $\omega>1$ denote a fixed parameter. We say $n$-point dataset in $\mathcal{S}^{d-1}$ is $\omega$-dense if $\omega=(\log n)/d$.
\end{definition}

Next, we present a fact for the $\omega$-dense dataset.
\begin{fact}\label{fact:omega_to_d}
For an $n$-point, $\omega$-dense dataset in $\mathcal{S}^{d-1}$, we show that $2^{-d\omega}=\frac{1}{n}$.
\end{fact}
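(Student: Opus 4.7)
The plan is to do a direct one-line substitution from Definition~\ref{def:dense_dataset}. Since the dataset is $\omega$-dense with $\omega = (\log n)/d$, multiplying both sides by $d$ gives $d\omega = \log n$, where $\log$ is understood to be base $2$ (the usual convention when the right-hand side is a power of $2$). Raising $2$ to the power $-d\omega$ then yields $2^{-d\omega} = 2^{-\log_2 n} = 1/n$, which is the claim.

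There is essentially no obstacle here: the statement is an immediate algebraic rewriting of the definition and serves only as a convenient reformulation that will be invoked later in counting arguments (for example, when bounding cap volumes or neighborhood sizes in terms of $n$ rather than in terms of $d$ and $\omega$). The only thing to be careful about is making the logarithm base explicit, so that the reader sees why $2^{-\log n} = 1/n$ rather than something off by a constant in the exponent; I would include a short remark that $\log$ denotes $\log_2$ throughout this section, matching the convention already used implicitly in Definition~\ref{def:dense_dataset}. No probabilistic or geometric machinery is needed, and the proof will fit in two or three lines.
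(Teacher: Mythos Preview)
Your proposal is correct and matches the paper's treatment: the paper states this as a fact without proof, since it is an immediate algebraic rewriting of Definition~\ref{def:dense_dataset} exactly as you describe. Your explicit remark about the base-$2$ logarithm is a helpful clarification that the paper leaves implicit.
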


Fact~\ref{fact:omega_to_d} is used in proving the main results of the paper. Besides, we also define a function that helps the proof.

\begin{definition}[$\alpha$ function]\label{def:alpha_function}
For a fixed parameter $\omega>1$, we define function $\alpha_x:[1, \log \omega] \to [0,1] $ as follows
\begin{align*}
    \alpha_x := \sqrt{1 - x^2 \cdot 2^{-2\omega}}.
\end{align*}
\end{definition}

Next, we introduce our formulation of $\nns$ problem in Definition~\ref{def:nn:formal}.

\begin{definition}[Nearest  Neighbor Search ($\nns$)]\label{def:nn:formal}
Let $\omega>1$ and $r\in (1,2^{\omega})$. Let $P\subset\mathbb{S}^{d-1}$ denote an $n$-vector, $\omega$-dense (Definition~\ref{def:dense_dataset}) dataset where every $p\in P$ is i.i.d. sampled from ${\cal U}(\mathbb{S}^{d-1})$.
The goal of the $r$-Nearest Neighbor ($\nns$) search is to construct a data structure that for any query $q \in \mathbb{S}^{d-1}$ with the premise that $\min_{p\in P} \sin \theta_{p,q} \leq r\cdot 2^{-\omega}$,
return a vector $p' \in P$ such that $\sin  \theta_{p',q}  \leq r\cdot 2^{-\omega}$.
\end{definition}

\paragraph{All Arguments Directly Applies to the Euclidean Distance:} In this paper, we set the distance measure for $\nns$ as the sine function. As the dataset and queries are in the unit sphere, Definition~\ref{def:nn:formal} naturally relates to $\nns$ in Euclidean distance.

\begin{figure}[t]
  \centering
\setlength{\abovecaptionskip}{0pt}
 \setlength{\belowcaptionskip}{-5pt}
    \includegraphics[width=0.5\linewidth]{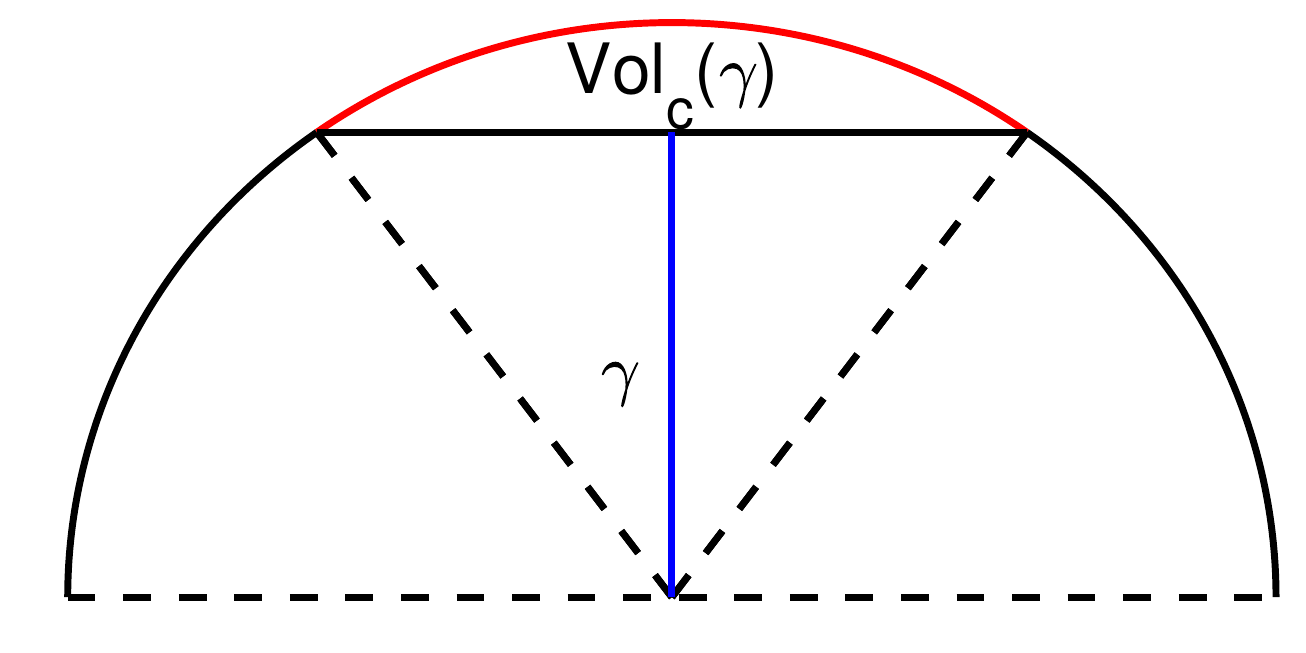}%
    \caption{\footnotesize Example of spherical cap and its volume.}\label{fig:volc}
\end{figure}

\subsection{Tools}
In this section, we introduce the tools from geometry and probability that helps the proof our theoretical results in this paper.
We start with the definitions of spherical caps and their volumes. Next, we introduce the definition of wedges and their volumes. Next, we introduce the Chebyshev's inequality and a corollary based on it.
\subsubsection{Volumes of Spherical Caps}
We start with introducing the definition of spherical caps on a unit sphere. 

\begin{definition}[Spherical cap]\label{def:spherical_cap}
Given a vector $x\in\mathbb{S}^{d-1}$, we define $C_x(\gamma)=\{y \in \mathbb{S}^{d-1}: \langle x, y \rangle \geq \gamma\}$ as a spherical cap of height $\gamma\in [0,1]$ centered at $x$. 
\end{definition}

Next, we define the notations for the volume of a spherical cap. We also use  Figure~\ref{fig:volc} illustrate it in 2-dimensional.

\begin{definition}[Volume of spherical cap]\label{def:vol_spherical_cap}
For a spherical cap $C_x(\gamma)$ defined in Definition~\ref{def:spherical_cap}, we denote $\vol_{c}(x,\gamma)$ as its volume relative to the volume of $\mathbb{S}^{d-1}$. However, as the volume of spherical cap does not depend on the center $x$, we write $\vol_{c}(x,\gamma)$ as $\vol_c(\gamma)$ for simplicity.
\end{definition}

Next, we list out a statement from~\citep{ps20} that lower bounds the volume of a spherical cap in Definition~\ref{def:spherical_cap}.

\begin{lemma}[\cite{ps20}]\label{lem:vol_cap}
Let $d\in \N_{+}$. Let $\gamma\in [0,1]$ denote a parameter. Let $\vol_c(\gamma)$ (see Definition~\ref{def:vol_spherical_cap}) denote the volume of a spherical cap $C_x(\gamma)$ (see Definition~\ref{def:spherical_cap}) defined on the unit sphere $\mathbb{S}^{d-1}$. Then we have
\begin{align*}
    \vol_c(\gamma) \gtrsim d^{-1/2}   (\sqrt{1 - \gamma^2})^{d} .
\end{align*}
\end{lemma}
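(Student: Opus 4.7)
The plan is to reduce $\vol_c(\gamma)$ to a one-dimensional integral in spherical coordinates, then match a short sub-interval of the cap integral against the normalization over the full sphere. Write $\phi_0 := \arccos\gamma$, so that $\sin\phi_0 = \sqrt{1-\gamma^2}$. The regime $\gamma < 0$ is immediate since $\vol_c(\gamma) > 1/2 \geq d^{-1/2}(\sqrt{1-\gamma^2})^d$, so assume $\phi_0 \in [0,\pi/2]$. Parametrizing $y \in \mathbb{S}^{d-1}$ by its polar angle $\phi$ to the axis $x$ and using that the $(d-2)$-dimensional slice at colatitude $\phi$ has area proportional to $\sin^{d-2}\phi$, one has
\begin{align*}
\vol_c(\gamma) \;=\; \frac{\int_0^{\phi_0}\sin^{d-2}\phi\,d\phi}{\int_0^{\pi}\sin^{d-2}\phi\,d\phi}.
\end{align*}
For the denominator I would invoke the Beta-function identity $\int_0^\pi \sin^{d-2}\phi\,d\phi = \sqrt{\pi}\,\Gamma((d-1)/2)/\Gamma(d/2)$ and Stirling to conclude it is $\Theta(1/\sqrt{d})$; this is exactly where the $d^{-1/2}$ factor in the statement comes from.

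For the numerator I would restrict attention to $\phi \in [\phi_0 - \ell, \phi_0]$ with $\ell := \tan\phi_0/(2d)$. Using $\sin(\phi_0-u) = \sin\phi_0\cos u - \cos\phi_0\sin u$ together with the elementary bounds $\cos u \geq 1 - u^2/2$ and $\sin u \leq u$, a short computation gives $\sin(\phi_0 - u) \geq (1 - 1/d)\sin\phi_0$ uniformly for $u \in [0,\ell]$. Raising to the $(d-2)$-th power loses only a constant factor since $(1-1/d)^{d-2} \geq e^{-1}$, so
\begin{align*}
\int_0^{\phi_0}\sin^{d-2}\phi\,d\phi \;\gtrsim\; \ell \cdot \sin^{d-2}\phi_0 \;=\; \frac{\sin^{d-1}\phi_0}{2d\cos\phi_0}.
\end{align*}
Dividing by the $\Theta(1/\sqrt{d})$ denominator and discarding the factor $1/\cos\phi_0 \geq 1$ together with one power of $\sin\phi_0 \leq 1$ yields $\vol_c(\gamma) \gtrsim (\sqrt{1-\gamma^2})^d/\sqrt{d}$, as required.

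\textbf{Main obstacle.} The delicate point is verifying that $\ell \leq \phi_0$ so that the sub-interval actually fits, and that the approximation $\sin(\phi_0-u) \geq (1 - 1/d)\sin\phi_0$ holds cleanly. The interval constraint forces $\cos\phi_0 \gtrsim 1/d$, which fails when $\gamma$ is very close to $0$. In that regime I would instead argue directly that $\vol_c(\gamma) \geq \vol_c(1/(2d))$ is bounded below by an absolute constant, using that $\langle x, y \rangle$ for uniform $y \in \mathbb{S}^{d-1}$ has standard deviation $\Theta(1/\sqrt{d})$ so the threshold $1/(2d)$ is well inside the bulk; this constant alone dominates $d^{-1/2}(\sqrt{1-\gamma^2})^d \leq d^{-1/2}$. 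The opposite regime $\gamma \to 1$ only shrinks $\ell$ and the argument goes through unchanged. The remainder is elementary trigonometric bookkeeping with no conceptual surprises.
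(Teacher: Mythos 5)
The paper itself does not prove this lemma --- it is imported from \citep{ps20} --- so there is no in-paper argument to compare against; your proposal has to stand on its own. Its skeleton is the standard one and is mostly sound: writing $\vol_c(\gamma)=\int_0^{\phi_0}\sin^{d-2}\phi\,\d\phi\,\big/\int_0^{\pi}\sin^{d-2}\phi\,\d\phi$ with $\phi_0=\arccos\gamma$, extracting the $d^{-1/2}$ from the Beta/Stirling estimate of the normalizer, and lower-bounding the numerator by a thin interval of width $\ell=\tan\phi_0/(2d)$ ending at $\phi_0$. In the regime where your pointwise bound holds, the bookkeeping checks out: the linear term costs $u\cos\phi_0\le\sin\phi_0/(2d)$, $(1-1/d)^{d-2}\ge e^{-1}$, and the chain $\vol_c(\gamma)\gtrsim \sin^{d-1}\phi_0/(\sqrt{d}\cos\phi_0)\ge \sin^{d}\phi_0/\sqrt{d}$ gives exactly the claimed bound.

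The genuine gap is in the case split. Your uniform claim $\sin(\phi_0-u)\ge(1-1/d)\sin\phi_0$ on $[0,\ell]$ needs the quadratic term to fit the budget as well: at $u=\ell$ you need $\ell^2/2\le 1/(2d)$, i.e.\ $\tan\phi_0\le 2\sqrt{d}$, i.e.\ $\gamma\gtrsim 1/\sqrt{d}$ --- a much stronger requirement than the interval-fit condition $\cos\phi_0\gtrsim 1/d$ you record as the only obstacle. Meanwhile your fallback covers only $\gamma\le 1/(2d)$, since it compares against $\vol_c(1/(2d))$ by monotonicity. So the window $\gamma\in\big(\Theta(1/d),\Theta(1/\sqrt{d})\big)$ is handled by neither branch, and there the stated pointwise inequality is not just unproven but false: for $d=100$, $\gamma=0.02$ one gets $\ell\approx 0.25$ and $\sin(\phi_0-\ell)\approx 0.96 < 0.99\,\sin\phi_0\approx 0.99$. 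The repair is immediate and uses your own fallback idea: place the cutoff at $\gamma\le c/\sqrt{d}$ instead of $1/(2d)$. For such $\gamma$, the threshold is $O(1)$ standard deviations of $\langle x,y\rangle$, so $\vol_c(\gamma)\ge\vol_c(c/\sqrt{d})=\Omega(1)\ge d^{-1/2}\ge d^{-1/2}(\sqrt{1-\gamma^2})^{d}$, while for $\gamma\ge c/\sqrt{d}$ your main calculation applies verbatim. With that single adjustment the proof is complete.
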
 

\subsection{Volume of Wedges}
We start with introducing the definition of wedge, which is the overlap area between two spherical caps.
\begin{definition}[Wedge]\label{def:wedge}
Given two spherical caps $C_x(\beta)$ and $C_y(\gamma)$ defined in Definition~\ref{def:spherical_cap}, we define the wedge $W_{x,y}$ as $W_{x,y}(\beta,\gamma)=C_x(\beta)\cap C_y(\gamma)$. 
\end{definition}

Next, we define the volume of wedge $W_{x,y}$ in Definition~\ref{def:wedge} as 
\begin{definition}[Volume of wedge]\label{def:volume_wedge}
Let $\theta = \arccos \langle x, y \rangle$. We use $\vol_w(\beta,\gamma, \theta)$ to denote the volume of $W_{x,y}(\beta,\gamma)$ (see Definition~\ref{def:wedge}) relative to the volume of $\mathbb{S}^{d-1}$. Note that the volume of wedge does not depend on the center $x$ and $y$, and only depends on $\beta,\gamma$ and $\theta$. 
\end{definition}

Next, we list out a statement from~\citep{ps20} that lower bounds the volume of wedge $W_{x,y}(\beta,\gamma)$.

\begin{lemma}[Propriety of the volume of wedge~\citep{ps20}]\label{lem:prep_vol_wedge}
Let $\theta=\arccos \langle x,y \rangle$. Let $\vol(\beta,\gamma,\theta)$ denote the volume (see Definition~\ref{def:volume_wedge}) of wedge $W_{x,y}(\beta,\gamma)$ (see Definition~\ref{def:wedge}). 
If $\beta \le \gamma \cos \theta$, then we have $ \vol_w(\beta,\gamma,\theta)>\vol_c(\gamma)/2$.
\end{lemma}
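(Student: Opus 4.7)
The plan is to reduce the claim to a hemisphere-symmetry argument after an appropriate change of coordinates. Concretely, I would write every point $z\in\mathbb{S}^{d-1}$ in the form $z = t\,y + \sqrt{1-t^2}\,u$, where $t\in[-1,1]$ and $u$ is a unit vector in the $(d{-}1)$-dimensional subspace $y^{\perp}$. Under this parametrization the uniform surface measure on the sphere factors as a density in $t$ (independent of $u$) times the uniform measure on $\mathbb{S}^{d-2}\subset y^{\perp}$, and the cap $C_y(\gamma)$ corresponds exactly to the slab $t\ge\gamma$.

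Next, I would rewrite the condition $z\in C_x(\beta)$ in these coordinates. Since $\langle x,y\rangle=\cos\theta$, we get
\begin{equation*}
\langle x,z\rangle \;=\; t\cos\theta + \sqrt{1-t^2}\,\langle x,u\rangle,
\end{equation*}
so membership in the wedge $W_{x,y}(\beta,\gamma)$ becomes the pair of conditions $t\ge\gamma$ and $\sqrt{1-t^2}\,\langle x,u\rangle \ge \beta - t\cos\theta$. The hypothesis $\beta\le\gamma\cos\theta$ guarantees that for every $t\ge\gamma$ one has $\beta-t\cos\theta\le 0$, so the second condition holds automatically whenever $\langle x,u\rangle\ge 0$. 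In other words, the "upper hemisphere" of $u$'s relative to the hyperplane $\{u:\langle x,u\rangle=0\}$ in $y^{\perp}$ sits entirely inside the wedge.

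From here a symmetry argument finishes the proof: fixing $t\ge\gamma$, the slice of $C_y(\gamma)$ at height $t$ is the whole $\mathbb{S}^{d-2}$, and the projection of $x$ onto $y^{\perp}$ (namely $x-\cos\theta\,y$) is nonzero whenever $\theta\in(0,\pi)$, so the half-space $\{u:\langle x,u\rangle\ge 0\}$ cuts this slice into two pieces of equal measure. Integrating over $t\in[\gamma,1]$ gives $\vol_w(\beta,\gamma,\theta)\ge\tfrac{1}{2}\vol_c(\gamma)$. To upgrade $\ge$ to the strict $>$ in the statement, I would note that for every $t>\gamma$ we have $t\cos\theta>\gamma\cos\theta\ge\beta$, so the inequality is also satisfied by a set of $u$'s with $\langle x,u\rangle<0$ having strictly positive $\mathbb{S}^{d-2}$-measure; integrating this extra contribution over $t\in(\gamma,1]$ yields the strict inequality.

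The main (mild) obstacle is bookkeeping rather than any hard estimate: one has to justify that the pushforward of the uniform measure on the sphere under the decomposition $z\mapsto(t,u)$ has a density in $t$ that is independent of $u$, so that the "half of each slice" argument really produces exactly half the cap volume. Beyond that, the edge cases $\theta=0$ (where the claim is trivial since $C_y(\gamma)\subset C_x(\beta)$) and $\gamma=1$ should be handled separately, but neither requires new ideas.
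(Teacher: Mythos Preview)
Your argument is correct and is the standard way to establish this fact. Note, however, that the present paper does not actually prove Lemma~\ref{lem:prep_vol_wedge}: it is quoted verbatim from \citep{ps20} and used as a black box, so there is no in-paper proof to compare against. What you wrote is essentially the argument that appears in the cited source (and earlier, e.g., in the lattice-sieving literature): parametrize $z=ty+\sqrt{1-t^2}\,u$ with $u\in\mathbb{S}^{d-2}\subset y^{\perp}$, observe that the uniform measure factors as a $t$-marginal times the uniform measure on $\mathbb{S}^{d-2}$, and then use the hypothesis $\beta\le\gamma\cos\theta$ to conclude that on every slice $t\ge\gamma$ the wedge occupies at least the closed hemisphere $\{u:\langle x-\cos\theta\,y,u\rangle\ge 0\}$, with strict containment for $t>\gamma$.

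Two small remarks on the write-up. First, your reduction implicitly uses $\cos\theta\ge 0$; this is automatic here because $\beta,\gamma\in[0,1]$ forces $\gamma\cos\theta\ge\beta\ge 0$, so you might state it explicitly rather than leave it tacit. Second, the strict inequality does require $\gamma<1$ (and $\theta\neq 0$ for the hemisphere to be proper), exactly as you flagged; in the paper's only application (Lemma~\ref{lem:varepsilon}) all of these non-degeneracy conditions hold, so no further work is needed.
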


\subsection{Chebyshev's Inequality}

We use the Chebyshev's inequality for the proof of our paper.
\begin{lemma}[Chebyshev's inequality]\label{lem:chebyshev}
Let $x$ denote a random variable with expected value $\E[x]$ and variance $\Var[x]$, we show that for any $b>0$:
$\Pr[|x-\E[x]|\geq b]\leq {\Var[x]}/{b^2}$.
\end{lemma}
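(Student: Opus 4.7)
The plan is to derive Chebyshev's inequality by reducing it to Markov's inequality applied to the non-negative random variable $Y := (x - \E[x])^2$. The first step is to rewrite the event of interest: since $b>0$ and the squaring map is monotone on the non-negative reals, we have the identity of events
\begin{align*}
\{|x - \E[x]| \geq b\} = \{(x - \E[x])^2 \geq b^2\},
\end{align*}
so the two sides have equal probability.

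Next, I would apply Markov's inequality to the non-negative random variable $Y$ at threshold $b^2$. I can either cite Markov directly or give a self-contained derivation via the pointwise bound $Y \geq b^2 \cdot \mathbf{1}[Y \geq b^2]$; taking expectations of both sides yields
\begin{align*}
\E[Y] \geq b^2 \cdot \Pr[Y \geq b^2].
\end{align*}
Dividing by $b^2>0$ and recognizing $\E[(x - \E[x])^2] = \Var[x]$ as the definition of variance produces the bound $\Pr[(x - \E[x])^2 \geq b^2] \leq \Var[x]/b^2$. Combining with the event identity from the previous paragraph gives the claimed inequality.

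There is no substantive obstacle in this argument: it is a two-line reduction to Markov's inequality, which is itself an immediate consequence of a pointwise comparison. The only point worth stating explicitly is that the event-rewriting step genuinely preserves probability, for which we use the hypothesis $b>0$ so that squaring is monotone on $[0,\infty)$ and no mass is gained or lost in passing from $|x-\E[x]|\geq b$ to $(x-\E[x])^2 \geq b^2$.
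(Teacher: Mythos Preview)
Your argument is correct: rewriting the event as $\{(x-\E[x])^2 \ge b^2\}$ and applying Markov's inequality to the non-negative variable $(x-\E[x])^2$ is the standard derivation, and you handle the $b>0$ hypothesis cleanly. The paper itself states Chebyshev's inequality as a classical tool without proof, so there is nothing to compare against; your proposal simply supplies the textbook justification.
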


\begin{corollary}\label{coro:chebyshev_extend}
Let $a,b\in (0,1/2)$ denote to parameters and $a<b$. 
Let $x$ denote a random variable that follows the binomial distribution $B(n,p)$, where $p\in [a,b]$. We show that $\Pr[|x-np|\geq nb/2]\leq \frac{4}{nb}$.
\end{corollary}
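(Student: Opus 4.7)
The plan is to apply Chebyshev's inequality (Lemma~\ref{lem:chebyshev}) directly, using the standard mean and variance of a binomial random variable. Since $x \sim B(n,p)$, I will use $\E[x] = np$ and $\Var[x] = np(1-p)$ without further justification, as these are textbook facts. The deviation parameter in the corollary is chosen to be $b = nb/2$, matching the target bound $\tfrac{4}{nb}$ that we want on the right-hand side.

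The execution proceeds in three short steps. First, I would invoke Lemma~\ref{lem:chebyshev} with deviation parameter $nb/2$, yielding
\begin{align*}
\Pr[|x - np| \geq nb/2] \leq \frac{\Var[x]}{(nb/2)^2} = \frac{4 \, np(1-p)}{n^2 b^2} = \frac{4 p(1-p)}{n b^2}.
\end{align*}
Second, I would bound the numerator using the two hypotheses on $p$: since $p \leq b$ (because $p \in [a,b]$) and $1-p \leq 1$, we get $p(1-p) \leq b$. Third, substituting gives
\begin{align*}
\frac{4 p(1-p)}{n b^2} \leq \frac{4 b}{n b^2} = \frac{4}{n b},
\end{align*}
which is exactly the claimed bound.

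There is essentially no obstacle here; the only decision is which factor in $p(1-p)$ to relax against $b$. Note that the hypothesis $a < b$ and the lower bound $p \geq a$ are not needed for this direction of the inequality, nor is the restriction $b < 1/2$ (that constraint presumably matters at the call site of the corollary rather than in its proof). The bound $p(1-p) \leq b$ is the cleanest available, and using $1-p \leq 1$ instead of, say, $1 - p \leq 1 - a$ is what produces the stated factor of $4$ in the numerator.
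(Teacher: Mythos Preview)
Your proof is correct and follows the same approach as the paper: apply Chebyshev's inequality to obtain $\frac{4p(1-p)}{nb^2}$, then bound $p(1-p)\le b$. The only cosmetic difference is that the paper justifies $p(1-p)\le b$ via the monotonicity of $t\mapsto t(1-t)$ on $[a,b]\subset(0,1/2)$ (so $p(1-p)\le b(1-b)<b$), whereas you use the more direct $p\le b$ and $1-p\le 1$; as you observed, your route does not actually need the hypothesis $b<1/2$.
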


\begin{proof}
We show that 
\begin{align*}
    \Pr[|x-np|\geq nb/2]
    \leq & ~\frac{4p(1-p)}{nb^2}<\frac{4}{nb},
\end{align*}
where the first step follows the Chebyshev's inequality~\ref{lem:chebyshev} and the definition of the variance of the Binomial distribution, the second step follows that $p(1-p)$ is monotonic increasing in $[a,b]$ when $b<1/2$.
\end{proof}

\section{Approximate Near Neighbor Graph}
In this section, we introduce our formulation of the approximate near neighbor graph ($\anng$).  We start with defining an oracle that is built based on the $\anng$. Next, we show the algorithm using this oracle for $\nns$. Finally, we provide the time and space complexities for the algorithm.

\subsection{Definitions}
We start by defining an oracle that is built on top of an $\anng$. The definition blocks requires statements from the supplementary material. 

\begin{definition}[Oracle]\label{def:graph_oracle}
Let $\tau>1$, $\omega>1$ and  $\delta\in (0,1)$ denote three fixed parameters. Let function $\alpha$ with parameter $\omega$ be defined as in Definition~\ref{def:alpha_function}. Let $P \subset \mathbb{S}^{d-1}$ denote a $n$-point, $\omega$-dense (Definition~\ref{def:dense_dataset}) dataset  where every $p\in P$ is i.i.d. sampled from ${\cal U}(\mathbb{S}^{d-1})$. Let $\{c_{i,j}|\ i,j\in[n]\ \textit{and} \ i\neq j  \}$ denote a set of $n(n-1)$ biased coins. When we toss any coin $c_{i,j}$ in this set where $i,j\in[n]$ and $i\neq j$, it shows head with probability at least $\delta$. 

We define the $\anng$ $G_{\tau,\delta, P}$ as a directed graph over $P$. $G_{\tau,\delta, P}$ has $n$ vertices, where each vertex represents a datapoint $p\in P$. There are no self-loops on $G_{\tau,\delta, P}$. For $i,j\in[n]$ and $i\neq j$, an edge from vertex $p_i$ to $p_j$ exists on $G_{\tau,\delta, P}$ if and only if: (1) $\langle p_i,p_j \rangle \geq \alpha_{\tau}$, (2) we flip the coin $c_{i,j}$ for one time and obtain head. 

For any $p\in P$, we define a neighbor set ${\cal N}_{\tau,\delta, P}(p)\subset P$ as: for every $p'\in {\cal N}_{\tau,\delta, P}(p)$, there exists an edge from $p$ to $p'$ on $G_{\tau,\delta, P}$.

We define an oracle ${\cal O}_{\tau,\delta, P}$ as:  If the oracle takes $p\in P$ as input, then it outputs ${\cal N}_{\tau,\delta, P}(p)$. 
\end{definition}

In the definition above, we formulate the $\anng$ by adding a noise probability $\delta$ in the construction of the graph. For a vertex on the graph, it has an out-edge to one of its near neighbors with probability $\delta<1$. This condition brings uncertainty to the existence of edges on graph, which is closer to the $\anng$ build by existing algorithms~\citep{dml11,zhg+13,fxw+17,bbm18,ztx+19,ztl20,txz+21}.

Next, we list the parameters required for the oracle defined on $\anng$ as below:

\begin{definition}[Oracle parameters]\label{def:oracle_param}
Let $\tau>1$, $\delta\in (0,1)$ denote two fix parameters. Let $P \subset \mathbb{S}^{d-1}$ denote a $n$-point dataset  where every $p\in P$ is i.i.d. sampled from ${\cal U}(\mathbb{S}^{d-1})$. Let ${\cal O}_{\tau,\delta,P}$ denote the oracle defined in Definition~\ref{def:graph_oracle}. ${\cal O}_{\tau,\delta,P}$ is associated with a $\anng$ $G_{\tau,\delta,P}$~(see  Definition~\ref{def:graph_oracle}). Let ${\cal N}_{\tau,\delta,P}(p)$ denote the neighbor set (see  Definition~\ref{def:graph_oracle}) for vertex $p\in P$ on $G_{\tau,\delta,P}$~(see  Definition~\ref{def:graph_oracle}). We define $N_{\tau,\delta,P}(p)$ as the size of the neighbor set ${\cal N}_{\tau,\delta,P}(p)$ for $p\in P$. We define $E_{\tau,\delta,P}$ as the number of edges on $G_{\tau,\delta,P}$.
\end{definition}

Next, we define the greedy step using oracle ${\cal O}_{\tau,\delta,P}$ defined in Definition~\ref{def:graph_oracle}.

\begin{definition}[Greedy step]\label{def:greedy_step}
With parameters defined in Definition~\ref{def:oracle_param},
for a query vector $q\in \mathbb{S}^{d-1}$, we define the greedy step for $q$ at $p\in P$ using oracle ${\cal O}_{\tau,\delta,P}$ as:
(1) Call oracle ${\cal O}_{\tau,\delta,P}$ with input $p$. Obtain ${\cal N}_{\tau,\delta,P}(p)$,  (2) If $\max_{p'\in {\cal N}_{\tau,\delta,P}(p)} \langle p',q \rangle>\langle p,q \rangle$, output $\arg\max_{p'\in {\cal N}_{\tau,\delta,P}(p)} \langle p',q \rangle$, otherwise, output $\mathsf{fail}$.
\end{definition}

\subsection{Algorithm}
In this section, we introduce the algorithms for $\nns$ using  oracle ${\cal O}_{\tau,\delta,P}$ defined in Definition~\ref{def:graph_oracle}. As shown in Algorithm~\ref{alg:nngraph_query}, we present a greedy search with three steps:
\begin{enumerate}
    \item starting at a vertex $p$, call oracle   ${\cal O}_{\tau,\delta,P}$ to obtain  ${\cal N}_{\tau,\delta,P}(p)$
    \item compute the sine distances between $q$ and every vector in ${\cal N}_{\tau,\delta,P}(p)$
    \item if we find a vector $p'$ in  ${\cal N}_{\tau,\delta,P}(p)$ that is closer to $q$ than $p$, set $p'$ to be the new $p$ and iterate again.
\end{enumerate}
Most of the current graph-based $\nns$ algorithms use this query algorithm in practice~\citep{fxw+17,my18,ztx+19,ztl20,txz+21}.

\ifdefined\isarxiv
\begin{algorithm}[H]
\caption{Query}
\label{alg:nngraph_query}
\begin{algorithmic}[1]
\Procedure{Query}{$q\in \R^d$, $\omega>1$, $r\in (1,2^{\omega})$, ${\cal O}_{\tau,\delta, P}$}\Comment{${\cal O}_{\tau,\delta, P}$ is defined in Definition~\ref{def:graph_oracle}}
\State $p\sim {\cal U}(P)$ \Comment{Uniformly random sample from set $P$}
\While{$\sin \theta_{p,q} > r\cdot 2^{-\omega}$} 
\State ${\cal N}_{\tau,\delta, P}(p)\leftarrow {\cal O}_{\tau,\delta, P}.\textsc{query}(p)$
\Comment{Call the oracle defined in Definition~\ref{def:graph_oracle}}
\State $p_{\textsf{tmp}}\leftarrow p$
\For{$p' \in {\cal N}_{\tau,\delta, P}(p)$} 
\If{$\langle p',q \rangle>\langle p_{\textsf{tmp}},q \rangle$}
\State $p_{\textsf{tmp}}\leftarrow p'$
\EndIf
\EndFor
\If{$p_{\textsf{tmp}}\neq p$}
\State $p\leftarrow p_{\textsf{tmp}}$\Comment{Make one step progress}
\Else
\State \Return $\textsf{Fail}$
\EndIf
\EndWhile
\State \Return $p$
\EndProcedure
\end{algorithmic}
\end{algorithm}
\else
\begin{algorithm}[!ht]
\caption{Query}
\label{alg:nngraph_query}
\begin{algorithmic}
\State {\bfseries Input:} $q\in \R^d$, $\omega>1$, $r\in (1,2^{\omega})$, ${\cal O}_{\tau,\delta, P}$\\
\Comment{${\cal O}_{\tau,\delta, P}$ is defined in Definition~\ref{def:graph_oracle}}
\State $p\sim {\cal U}(P)$ 
\Comment{Uniformly random sample from set $P$}
\While{$\sin \theta_{p,q} > r\cdot 2^{-\omega}$}
\State ${\cal N}_{\tau,\delta, P}(p)\leftarrow {\cal O}_{\tau,\delta, P}.\textsc{query}(p)$
\Comment{Call the oracle defined in Definition~\ref{def:graph_oracle}}
\State $p_{\textsf{tmp}}\leftarrow p$
\For{$p' \in {\cal N}_{\tau,\delta, P}(p)$} 
\If{$\langle p',q \rangle>\langle p_{\textsf{tmp}},q \rangle$}
\State $p_{\textsf{tmp}}\leftarrow p'$
\EndIf
\EndFor
\If{$p_{\textsf{tmp}}\neq p$}
\State $p\leftarrow p_{\textsf{tmp}}$
\Comment{Make one step progress}
\Else
\State {\bfseries Return:} $\textsf{Fail}$
\EndIf
\EndWhile 
\State \textbf{Return} $p$
\end{algorithmic}
\end{algorithm}
\fi

\subsection{Running Time of One Greedy Step}
In this section, we estimate the running time of one greedy step on the $\anng$. To start with, we estimate the number of neighbors for a vertex on the $\anng$.

\begin{lemma}[Estimation of number of neighbors]\label{lem:num_neighbor}
With the parameters defined in Definition~\ref{def:oracle_param}, we show that
with probability at least $1-O(\frac{1}{n\vol_c(\alpha_\tau)})$, the number of edges $N_{\tau,\delta,P}(p)$ is $O(n\vol_c(\alpha_\tau))$.
\end{lemma}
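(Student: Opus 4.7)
The plan is to reduce the upper bound on $N_{\tau,\delta,P}(p)$ to a single Binomial tail inequality, ignoring the coin flips entirely. Fix any $p = p_i \in P$ and let
\[
M := |\{\,j \in [n] : j \neq i,\ \langle p_i, p_j\rangle \geq \alpha_\tau\,\}|
\]
denote the number of geometric near neighbors of $p$, i.e.\ the out-degree one would see in the exact near neighbor graph. Since each coin $c_{i,j}$ can only delete an edge relative to this set, I have the deterministic bound $N_{\tau,\delta,P}(p) \leq M$, and it suffices to prove $M = O(n\vol_c(\alpha_\tau))$ with the claimed probability.

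Conditioning on $p_i$, the remaining points $p_j$ are still i.i.d.\ uniform on $\mathbb{S}^{d-1}$, so each indicator $\mathbf{1}[\langle p_i,p_j\rangle \geq \alpha_\tau]$ is an independent Bernoulli with success probability exactly $\vol_c(\alpha_\tau)$ by Definition~\ref{def:vol_spherical_cap}. Hence $M \sim B(n-1,\vol_c(\alpha_\tau))$ is a genuine Binomial. I would then invoke Corollary~\ref{coro:chebyshev_extend} with $b = \vol_c(\alpha_\tau)$ (which lies in $(0,1/2)$ since $\alpha_\tau > 0$ in the regime of Definition~\ref{def:alpha_function}) to conclude
\[
\Pr\!\left[\,\big|M - (n-1)\vol_c(\alpha_\tau)\big| \geq (n-1)\vol_c(\alpha_\tau)/2\,\right] \leq \frac{4}{(n-1)\vol_c(\alpha_\tau)} = O\!\left(\frac{1}{n\vol_c(\alpha_\tau)}\right).
\]
On the complementary event $M \leq \tfrac{3}{2}(n-1)\vol_c(\alpha_\tau) = O(n\vol_c(\alpha_\tau))$, and combined with $N_{\tau,\delta,P}(p) \leq M$ this delivers the lemma.

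There is no genuine obstacle here; the main observation is that the probabilistic coin layer can simply be dropped when upper-tailing the degree, which converts a Poisson-binomial (with coin-dependent parameters in $[\delta\vol_c(\alpha_\tau),\vol_c(\alpha_\tau)]$) into a plain Binomial so that Corollary~\ref{coro:chebyshev_extend} applies directly. The only routine side-check is $\vol_c(\alpha_\tau) < 1/2$ to meet the corollary's hypothesis, which holds automatically since $\tau > 1$ keeps $\alpha_\tau$ bounded away from $0$, so the cap $C_{p_i}(\alpha_\tau)$ is strictly smaller than a hemisphere.
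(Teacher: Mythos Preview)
Your proof is correct and close in spirit to the paper's, but with one clean simplification worth noting. The paper analyzes $N_{\tau,\delta,P}(p)$ directly: it observes that each potential neighbor $p_j$ joins $\mathcal{N}_{\tau,\delta,P}(p)$ independently with probability in $[\delta,1]\cdot\vol_c(\alpha_\tau)$, bounds the expectation accordingly, and then applies Corollary~\ref{coro:chebyshev_extend} to this sum of Bernoullis. Strictly speaking that sum is Poisson--binomial (the coin probabilities are only assumed $\geq\delta$, not equal), so invoking the Binomial corollary is slightly informal, though the underlying Chebyshev argument still goes through. Your route sidesteps this entirely: by dominating $N_{\tau,\delta,P}(p)\le M$ deterministically and noting that $M$ is a \emph{genuine} $B(n-1,\vol_c(\alpha_\tau))$ random variable, Corollary~\ref{coro:chebyshev_extend} applies on the nose. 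The trade-off is that the paper's argument also yields a lower bound $N_{\tau,\delta,P}(p)\ge(1-\tfrac{\delta}{2})(n-1)\vol_c(\alpha_\tau)$, which your domination argument cannot give; but the lemma as stated only asks for the $O(\cdot)$ upper bound, so nothing is lost for the present purpose.
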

\begin{proof}
Following definition of the oracle ${\cal N}_{\tau,\delta,P}(p)$ (see Definition~\ref{def:graph_oracle}), we start with showing that 
\begin{align}\label{eq:edge_exist_prob}
    \Pr[p'\in {\cal U}(\mathbb{S}^{d-1}) \ \textit{and} \ p'\in {\cal N}_{\tau,\delta,P}(p) ]\in \big[\delta,1\big]\cdot\vol_c(\alpha_\tau).
\end{align}

Next, because every $p\in P$ is i.i.d. sampled from ${\cal U}(\mathbb{S}^{d-1})$, $N_{\tau,\delta,P}(p)$ follows the Binomial distribution with $n-1$ trials and each trails has success probability equal to $\Pr[p'\in {\cal U}(\mathbb{S}^{d-1}) \ \textit{and} \ p'\in {\cal N}_{\tau,\delta,P}(p) ]$. Therefore, the expectation of $N_{\tau,\delta,P}(p)$ could be bounded as: 
\begin{align}\label{eq:neighbors_num_expect}
    \E[N_{\tau,\delta,P}(p)]\in \big[\delta,1\big]\cdot(n-1)\vol_c(\alpha_\tau).
\end{align}

Next, following Corollary~\ref{coro:chebyshev_extend}, we show that,
\begin{align}\label{eq:contentration_num_neigbors}
     \Pr\big[|N_{\tau,\delta,P}(p)-\E[N_{\tau,\delta,P}(p)]|> 0.5  (n-1)\vol_c(\alpha_\tau)  \big] 
       <  \frac{4}{(n-1)\vol_c(\alpha_\tau)}.
\end{align}

Combining Eq.~\eqref{eq:contentration_num_neigbors} and Eq.~\eqref{eq:neighbors_num_expect}, we prove that with probability at least $1- \frac{4}{(n-1)\vol_c(\alpha_\tau)}$, the expected number of neighbors follows $$N_{\tau,\delta,P}(p) \in[1-0.5\delta,1.5]\cdot (n-1)\vol_c(\alpha_\tau).$$
For simplicity we could write $N_{\tau,\delta,P}(p)$ as $O(n\vol_c(\alpha_\tau))$.
\end{proof}

Next, we provide the running time of one greedy step as below:

\begin{lemma}\label{lem:1-steps}
With parameters defined in Definition~\ref{def:oracle_param}, for a query vector $q\in \mathbb{S}^{d-1}$, we show that with probability at least $1 - O(\frac{1}{ n\cdot\vol_c(\alpha_\tau)})$, the time complexity of a greedy step (Definition~\ref{def:greedy_step}) for $q$ at $p\in P$ using oracle ${\cal O}_{\tau,\delta,P}$ is $ O(nd\cdot\vol_c(\alpha_\tau))$.
\end{lemma}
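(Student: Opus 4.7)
The plan is to reduce the running time of one greedy step to a counting question on the out-neighbors, and then invoke the earlier Lemma~\ref{lem:num_neighbor} as a black box. By Definition~\ref{def:greedy_step}, a single greedy step at $p \in P$ consists of (i) one oracle call ${\cal O}_{\tau,\delta,P}(p)$ that returns the set ${\cal N}_{\tau,\delta,P}(p)$, and (ii) a sweep through this set that computes the inner product $\langle p',q\rangle$ for every $p' \in {\cal N}_{\tau,\delta,P}(p)$ and keeps the running argmax.

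First I would argue that in the real RAM model each inner product in step (ii) costs $O(d)$ time, the comparison against the running maximum is $O(1)$, and listing/scanning the oracle's output is linear in $N_{\tau,\delta,P}(p)$. Therefore the total running time of the greedy step is $O(d \cdot N_{\tau,\delta,P}(p))$, deterministically, once the graph $G_{\tau,\delta,P}$ is fixed. This makes the runtime a direct function of the neighbor count, reducing the lemma to a tail bound on $N_{\tau,\delta,P}(p)$.

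Second, I would apply Lemma~\ref{lem:num_neighbor}, which already establishes that with probability at least $1 - O(1/(n \vol_c(\alpha_\tau)))$ over the randomness in $P$ and the biased coins defining $G_{\tau,\delta,P}$, we have $N_{\tau,\delta,P}(p) = O(n \vol_c(\alpha_\tau))$. Plugging this bound into the runtime expression from the previous step yields the claimed $O(nd \cdot \vol_c(\alpha_\tau))$ complexity, with the failure probability transferring verbatim from Lemma~\ref{lem:num_neighbor}.

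There is no real obstacle here; the only point that deserves explicit mention is that the randomness in the lemma is entirely over the graph construction (point sampling plus coin flips), while the per-neighbor $O(d)$ cost and the scan logic are deterministic. So the success event on which the runtime bound holds is exactly the success event of Lemma~\ref{lem:num_neighbor}, and no union bound or additional concentration argument is needed.
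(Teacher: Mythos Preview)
Your proposal is correct and mirrors the paper's own proof: reduce the greedy-step cost to $O(d)\cdot N_{\tau,\delta,P}(p)$ and then invoke Lemma~\ref{lem:num_neighbor} as a black box to bound the neighbor count with the stated probability. The paper's argument is in fact terser than yours, so nothing is missing.
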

\begin{proof}
The greedy step defined in Definition~\ref{def:greedy_step} requires comparison of $q$ with all vector in the $N_{\tau,\delta,P}(p)$. Following Lemma~\ref{lem:num_neighbor}, we know that we probability at least $1 - O(\frac{1}{ n\cdot\vol_c(\alpha_\tau)})$, it requires $O(n\vol_c(\alpha_\tau))$ number of distance computations. As each distance computation takes $O(d)$, we prove the final complexity.
\end{proof}

\subsection{Space of the Algorithm}

In this section, we provide the space complexity of Algorithm~\ref{alg:nngraph_query} in real RAM. To perform the query as shown in Algorithm~\ref{alg:nngraph_query}, we need to store the whole $\anng$ and the whole dataset. Storing the dataset $P$ takes $O(nd)$ space. Meanwhile, the space complexity for a graph is determined by the number of edges in this graph. Thus, we provide an estimation of the number of edges as below.
\begin{lemma}[Estimate of number of edges]\label{lem:space}
With the parameters defined in Definition~\ref{def:oracle_param}, we show that with probability $1 - O(\frac{1}{n^2\cdot\vol_c(\alpha_\tau)})$, the number of edges $ E_{\tau,\delta,P}$ is $O(n^2\cdot \vol_c(\alpha_\tau))$.
\end{lemma}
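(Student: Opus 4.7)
The plan is to mirror the proof of Lemma~\ref{lem:num_neighbor}, but globally over all ordered pairs of distinct vertices rather than locally around a fixed vertex. Write $E_{\tau,\delta,P} = \sum_{i\neq j} X_{i,j}$, where $X_{i,j}$ indicates that the directed edge from $p_i$ to $p_j$ exists, i.e. $X_{i,j}=\mathbf{1}[\langle p_i,p_j\rangle \geq \alpha_\tau]\cdot\mathbf{1}[c_{i,j}=H]$. Since the coin $c_{i,j}$ is independent of $(p_i,p_j)$ and, by rotational invariance of the uniform distribution on $\mathbb{S}^{d-1}$, $\Pr[\langle p_i,p_j\rangle\geq\alpha_\tau]=\vol_c(\alpha_\tau)$, one obtains $\E[X_{i,j}]\in[\delta,1]\cdot\vol_c(\alpha_\tau)$ and hence
\[
\E[E_{\tau,\delta,P}] \in [\delta,1]\cdot n(n-1)\vol_c(\alpha_\tau).
\]

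For concentration, the natural move is to apply Corollary~\ref{coro:chebyshev_extend} with parameters $n\to n(n-1)$ and $b\to \vol_c(\alpha_\tau)$, which already gives the claimed tail bound $O\!\left(\tfrac{1}{n^2\vol_c(\alpha_\tau)}\right)$. The technical care is that $E_{\tau,\delta,P}$ is not literally a $B(n(n-1),p)$ variable: the indicators $X_{i,j}$ and $X_{i,k}$ share the random point $p_i$. The redeeming observation is that, by rotational symmetry of $\mathcal{U}(\mathbb{S}^{d-1})$, the conditional probability $\Pr[\langle p_i,p_j\rangle\geq\alpha_\tau\mid p_i]$ is the constant $\vol_c(\alpha_\tau)$, independent of $p_i$. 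Combined with independence of the coins, this forces $\mathrm{Cov}(X_{i,j},X_{i,k})=0$ whenever the two pairs share exactly one endpoint. The only nonzero off-diagonal covariances come from reversed pairs $(i,j)$ and $(j,i)$, which share the same geometric event; there are $O(n^2)$ such pairs, each contributing at most $\vol_c(\alpha_\tau)$ to the variance. Together with the $n(n-1)$ diagonal terms, each bounded by $\vol_c(\alpha_\tau)$, this yields $\mathrm{Var}[E_{\tau,\delta,P}] = O(n^2\vol_c(\alpha_\tau))$, matching what a genuine Binomial of those parameters would give.

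Finally, Chebyshev's inequality (Lemma~\ref{lem:chebyshev}) with deviation $0.5\,n(n-1)\vol_c(\alpha_\tau)$ gives
\[
\Pr\bigl[\,|E_{\tau,\delta,P}-\E[E_{\tau,\delta,P}]|\geq 0.5\,n(n-1)\vol_c(\alpha_\tau)\,\bigr] = O\!\left(\frac{1}{n^2\vol_c(\alpha_\tau)}\right),
\]
so on the complementary event $E_{\tau,\delta,P}\leq 1.5\,n(n-1)\vol_c(\alpha_\tau)=O(n^2\vol_c(\alpha_\tau))$, as required.

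The main obstacle, and the only place where this proof is not a purely mechanical upgrade of Lemma~\ref{lem:num_neighbor}, is verifying that the dependencies among the $X_{i,j}$ do not inflate the variance beyond the Binomial order. Everything else, including the invocation of Corollary~\ref{coro:chebyshev_extend}, is routine once that variance bound is in hand.
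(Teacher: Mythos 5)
Your proposal is correct and follows the same overall route as the paper: bound the edge-existence probability by $[\delta,1]\cdot\vol_c(\alpha_\tau)$ using Eq.~\eqref{eq:edge_exist_prob}, get $\E[E_{\tau,\delta,P}]\in[\delta,1]\cdot n(n-1)\vol_c(\alpha_\tau)$, and conclude by a Chebyshev-type deviation bound at scale $0.5\,n(n-1)\vol_c(\alpha_\tau)$. The one place you diverge is exactly the place the paper is cavalier: the paper's proof simply declares that $E_{\tau,\delta,P}$ follows a Binomial distribution with $n(n-1)$ trials and invokes Corollary~\ref{coro:chebyshev_extend}, ignoring that indicators of edges sharing a vertex are not independent, whereas you replace that claim with an explicit covariance computation. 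Your argument that $\Pr[\langle p_i,p_j\rangle\geq\alpha_\tau\mid p_i]=\vol_c(\alpha_\tau)$ is constant by rotational invariance, so that any two ordered pairs sharing exactly one endpoint have zero covariance (the conditional independence given the shared point, plus independent coins, gives $\E[X_{i,j}X_{i,k}]=\E[X_{i,j}]\E[X_{i,k}]$, and likewise for shared heads or mixed positions), is sound; the only surviving off-diagonal terms are the $O(n^2)$ reversed pairs $(i,j),(j,i)$, each with covariance at most $\vol_c(\alpha_\tau)$, so $\Var[E_{\tau,\delta,P}]=O(n^2\vol_c(\alpha_\tau))$ and Lemma~\ref{lem:chebyshev} yields the stated $O\big(\frac{1}{n^2\vol_c(\alpha_\tau)}\big)$ failure probability. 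In short, you prove the same bound by the same mechanism, but your version actually justifies the concentration step that the paper asserts via a not-quite-accurate Binomial model; the two approaches buy the same constants and the same tail, and yours is the one that survives scrutiny of the dependence structure.
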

\begin{proof}
From Eq.~\eqref{eq:edge_exist_prob}, we know that an edge from vertex $p$ to vertex $p'$ exist with probability lower than $\vol_c(\alpha_\tau)$ but higher than $\delta\cdot \vol_c(\alpha_\tau)$. Therefore,  the expected number for $E_{\tau,\delta,P}$ follows the Binomial distribution with $n(n-1)$ trials and each trails has success probability equal to $\Pr[p'\in {\cal U}(\mathbb{S}^{d-1}) \ \textit{and} \ p'\in {\cal N}_{\tau,\delta,P}(p) ]$. Following this distribution, we write the expected number of edges as
\begin{align}\label{eq:edges_num_expect}
   \E[ E_{\tau,\delta,P}]\in \big[\delta ,1\big]\cdot 
   n(n-1)\vol_c(\alpha_\tau).
\end{align}

Next, following Corollary~\ref{coro:chebyshev_extend}, we show that,
\begin{align}\label{eq:contentration_num_edges}
      \Pr\big[| E_{\tau,\delta,P}-\E[ E_{\tau,\delta,P}]|> 0.5  n(n-1) \vol_c(\alpha_\tau)\big] 
     <  \frac{4}{n(n-1)\vol_c(\alpha_\tau)}.
\end{align}

Combing Eq.~\eqref{eq:contentration_num_edges} and Eq.~\eqref{eq:edges_num_expect}, with probability $1 - O(\frac{1}{n^2\cdot\vol_c(\alpha_\tau)})$, we show that the number of edges
$ E_{\tau,\delta,P}$ is $O(n^2\vol_c(\alpha_\tau))$. 
\end{proof}

Therefore, with probability $1 - O(\frac{1}{n^2\cdot\vol_c(\alpha_\tau)})$, the space complexity for $\anng$ $G_{\tau,\delta,P}$~(see  Definition~\ref{def:graph_oracle}) is $O(nd+n^2\cdot \vol_c(\alpha_\tau))$.

\section{Guarantees}\label{sec:thoery_guanratee}
In this section, we provide the theoretical guarantees for solving $\nns$ problem via Algorithm~\ref{alg:nngraph_query}. We start with presenting the supporting lemmas. Then, we introduce our main theorem with proof.

\subsection{Lower Bound of Wedge Volume}
In this section, we provide the lower bound for the volume for wedge $\vol_w(\alpha_\tau, \alpha_s, \arcsin((s+ \varepsilon)2^{-\omega}))$.   

\begin{lemma}[Lower bound of wedge volume]\label{lem:varepsilon}
Let function $\alpha$ be defined as Definition~\ref{def:alpha_function}. Let $s>1$ denote a parameter. Let $\varepsilon>0$. If $\tau \geq \sqrt{2}(s+\epsilon)$, then
$\vol_w(\alpha_\tau, \alpha_s, \arcsin((s+ \varepsilon)2^{-\omega})) \geq \frac{s^{d}}{n\sqrt{d}}$.
\end{lemma}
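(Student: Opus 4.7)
The plan is to reduce this to the two tools we already have: Lemma~\ref{lem:prep_vol_wedge} (which converts a wedge volume into a cap volume under the condition $\beta \le \gamma \cos\theta$) and Lemma~\ref{lem:vol_cap} (which gives an explicit cap lower bound), using Fact~\ref{fact:omega_to_d} to convert $2^{-d\omega}$ into $1/n$.

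First I would set $\theta := \arcsin((s+\varepsilon)2^{-\omega})$ and identify $\beta = \alpha_\tau$, $\gamma = \alpha_s$. Under Definition~\ref{def:alpha_function} this gives $\cos\theta = \sqrt{1-(s+\varepsilon)^2 2^{-2\omega}} = \alpha_{s+\varepsilon}$ and $\sqrt{1-\alpha_s^2} = s\cdot 2^{-\omega}$. Next I would verify the hypothesis of Lemma~\ref{lem:prep_vol_wedge}, namely $\alpha_\tau \le \alpha_s \cdot \alpha_{s+\varepsilon}$. Squaring and expanding, this is equivalent to
\begin{align*}
s^2 + (s+\varepsilon)^2 - \tau^2 \ \le\ s^2(s+\varepsilon)^2\, 2^{-2\omega}.
\end{align*}
Under the assumption $\tau \ge \sqrt{2}(s+\varepsilon)$ one has $\tau^2 \ge 2(s+\varepsilon)^2 \ge s^2 + (s+\varepsilon)^2$, so the left side is non-positive while the right side is non-negative, and the inequality holds.

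Having verified the wedge condition, Lemma~\ref{lem:prep_vol_wedge} gives
\begin{align*}
\vol_w\bigl(\alpha_\tau,\alpha_s,\arcsin((s+\varepsilon)2^{-\omega})\bigr)\ >\ \tfrac{1}{2}\vol_c(\alpha_s).
\end{align*}
I would then apply Lemma~\ref{lem:vol_cap} and Fact~\ref{fact:omega_to_d} to bound
\begin{align*}
\vol_c(\alpha_s)\ \gtrsim\ d^{-1/2}\bigl(\sqrt{1-\alpha_s^2}\bigr)^d \ =\ d^{-1/2}\bigl(s\cdot 2^{-\omega}\bigr)^d \ =\ \frac{s^d}{\sqrt{d}}\cdot 2^{-d\omega}\ =\ \frac{s^d}{n\sqrt{d}},
\end{align*}
which together with the wedge bound yields the claim.

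The proof is essentially bookkeeping; the only step with any content is the algebraic verification that $\tau\ge\sqrt{2}(s+\varepsilon)$ implies the chord inequality $\alpha_\tau\le\alpha_s\alpha_{s+\varepsilon}$ — in particular, understanding why a factor of $\sqrt{2}$ (rather than just a factor slightly larger than $1$) appears in the hypothesis. The rest is a direct substitution of the definitions of $\alpha_x$ and the identity $2^{-d\omega}=1/n$ into the already-established cap and wedge volume estimates.
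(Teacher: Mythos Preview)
Your proposal is correct and follows essentially the same route as the paper: verify $\alpha_\tau \le \alpha_s\,\alpha_{s+\varepsilon}$ (so that Lemma~\ref{lem:prep_vol_wedge} applies with $\cos\theta=\alpha_{s+\varepsilon}$), then chain Lemma~\ref{lem:vol_cap} and Fact~\ref{fact:omega_to_d}. The only cosmetic difference is that the paper proves the key inequality via a chain of square-root bounds ($\sqrt{1-a-b}<\sqrt{1-a}\sqrt{1-b}$) whereas you square both sides and check $s^2+(s+\varepsilon)^2-\tau^2\le 0$; both arguments are equivalent and equally short.
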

\begin{proof}
We start with showing that
\begin{align}\label{eq:alpha_m_s_eps}
    \alpha_\tau
    = & ~ \sqrt{1-\tau^2\cdot 2^{-2\omega}}\notag\\
    \leq & ~ \sqrt{1-2(s+\epsilon)^2\cdot 2^{-2\omega}}\notag\\
    <  & ~  \sqrt{1-s^2\cdot 2^{-2\omega}-(s+\epsilon)^2\cdot 2^{-2\omega}}\notag\\
    <  & ~ \sqrt{1-s^22^{-2\omega}}\cdot \sqrt{1-(s+\epsilon)^2 2^{-2\omega}}\notag\\
    = & ~ \alpha_s\alpha_{s+\epsilon},
\end{align}
where the second step follows from $\tau \geq \sqrt{2}(s+\epsilon)$, the third step follows from $\epsilon>0$, the forth step follows from $\sqrt{1-a-b}<\sqrt{1-a}\cdot \sqrt{1-b}$ for all $a,b\in [0,1]$, the fifth step follows the definition of $\alpha$ function in Definition~\ref{def:alpha_function}.

Next, we present the lower bound of the wedge $\vol_w(\alpha_\tau, \alpha_s, \arcsin((s+ \varepsilon)2^{-\omega}))$ as:
\begin{align*}
     \vol_w(\alpha_\tau, \alpha_s, \arcsin((s+ \varepsilon)2^{-\omega})) 
    > & ~ \frac{1}{2} \vol_c(\alpha_s)\\
    > & ~ \Theta( d^{-1/2} ) (\sqrt{1 - \alpha_s^2})^{d}\\
    = & ~ \Theta( d^{-1/2} ) s^{d}2^{-d\omega}\\
    = & ~ \Theta( d^{-1/2} ) \cdot \frac{s^{d}}{n}\\
    \geq & ~ \frac{s^{d}}{n\sqrt{d}},
\end{align*}
where the first step follows from Lemma.~\ref{lem:prep_vol_wedge}, the second step follows from Lemma~\ref{lem:vol_cap}, the third step follows from the definition of $\alpha$ function, the forth step follows from $2^{-d\omega}=\frac{1}{n}$, the fifth step follows from the definition of $\Theta(d^{-1/2})$.
\end{proof}

\subsection{Probability of Making Progress}
In this section, we present the lower bound for the probability of making a step progress towards the nearest neighbor. 
\begin{lemma}[]\label{lem:progress}
With parameters defined in Definition~\ref{def:oracle_param},
given a query vector $q\in \mathbb{S}^{d-1}$, we show that for any $p_1\in P$ such that $\sin \theta_{p_1,q}= (s+\varepsilon)2^{-\omega}$, if $\tau\geq \sqrt{2}(s+\epsilon)$, then
\begin{align*}
    \Pr\big[\exists p_{2}\in P, \ p_{2}\in{\cal N}_{\tau,\delta,P}(p_1) \ \textit{and} \ \sin{\theta_{p_2,q}}\leq s\cdot2^{-\omega}\big] \geq 1 - O(e^{-s^{d}\delta/\sqrt{d}}).
\end{align*}
\end{lemma}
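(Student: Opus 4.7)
The plan is to recast the existence event as counting points in a specific wedge, and then bound the failure probability by independence across the remaining dataset points. The two defining conditions on $p_2$ unpack as (i) $\langle p_1, p_2\rangle \geq \alpha_\tau$, (ii) the coin $c_{1,2}$ shows heads, and (iii) $\langle p_2, q\rangle \geq \alpha_s$; the equivalence of (iii) with $\sin \theta_{p_2,q} \leq s\cdot 2^{-\omega}$ follows from $\alpha_s = \sqrt{1 - s^2 2^{-2\omega}}$ and restriction to the small-angle branch, which suffices since we only need existence. Conditions (i) and (iii) together say exactly that $p_2 \in W_{p_1,q}(\alpha_\tau, \alpha_s)$, where the angle between the two centers is $\theta_{p_1,q} = \arcsin((s+\varepsilon)2^{-\omega})$. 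The hypothesis $\tau \geq \sqrt{2}(s+\varepsilon)$ is exactly what Lemma~\ref{lem:varepsilon} requires in order to supply the wedge-volume lower bound
\[
    \vol_w\bigl(\alpha_\tau, \alpha_s, \arcsin((s+\varepsilon)2^{-\omega})\bigr) \geq \frac{s^d}{n\sqrt{d}}.
\]

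Next I would bound the probability that no valid $p_2$ exists among the other $n-1$ points. Since every $p \in P \setminus \{p_1\}$ is i.i.d.\ uniform on $\mathbb{S}^{d-1}$, the probability that a given such $p$ lies in the wedge equals the wedge volume, and this event is independent of the coin $c_{1,p}$, which by Definition~\ref{def:graph_oracle} shows heads with probability at least $\delta$. Thus each of the $n-1$ remaining points independently furnishes a valid $p_2$ with probability at least $\delta \cdot \vol_w \geq \delta s^d/(n\sqrt{d})$, so using $1-x \leq e^{-x}$,
\[
    \Pr[\text{no valid } p_2] \leq \bigl(1 - \delta s^d/(n\sqrt{d})\bigr)^{n-1} \leq \exp\!\bigl(-(n-1)\delta s^d/(n\sqrt{d})\bigr) = O\!\bigl(e^{-\delta s^d/\sqrt{d}}\bigr),
\]
absorbing the $(n-1)/n$ factor into the $O(\cdot)$. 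Taking the complement yields the claim.

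I do not anticipate a serious obstacle: Lemma~\ref{lem:varepsilon} already carries the geometric content, and everything else is independence bookkeeping. The one subtlety is the implicit conditioning on the chosen $p_1$; we fix some $p_1 \in P$ satisfying the angular-distance assumption and argue over the remaining $n-1$ points. Because the dataset points are i.i.d.\ and the coins $\{c_{1,j}\}_{j\neq 1}$ are independent of the point sampling, this conditioning is harmless and the product bound above is genuine.
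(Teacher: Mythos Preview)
Your proposal is correct and follows essentially the same route as the paper: identify the event as landing in the wedge $W_{p_1,q}(\alpha_\tau,\alpha_s)$, invoke Lemma~\ref{lem:varepsilon} for the volume lower bound $s^d/(n\sqrt d)$, multiply by the coin probability $\delta$, and then take the product over the $n-1$ independent remaining points to get $(1-\delta s^d/(n\sqrt d))^{n-1}=O(e^{-\delta s^d/\sqrt d})$. If anything you are slightly more careful than the paper in noting that $\langle p_2,q\rangle\ge\alpha_s$ only captures the small-angle branch of $\sin\theta_{p_2,q}\le s\cdot 2^{-\omega}$, which is fine since existence is all that is needed.
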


\begin{proof}
We start with lower bounding the probability that a data point $p_2\in {\cal U}({\cal S}^{d-1})$ satisfies $\langle p_2, p_1 \rangle\geq \alpha_\tau$ and $\sin{\theta_{p_2,q}}\leq s\cdot2^{-\omega}$.

\begin{align}\label{eq:one_point_progress}
     \Pr\big[p_2\in {\cal U}({\cal S}^{d-1}), \ \langle p_{2},p_1 \rangle\geq \alpha_\tau , \sin{\theta_{p_2,q}}\leq s\cdot2^{-\omega}\big] 
     = & ~ \vol_w(\alpha_\tau, \alpha_s, \arcsin((s+ \varepsilon)2^{-\omega}))\notag\\
     > & ~ \frac{s^{d}}{n\sqrt{d}},
\end{align}
where the first step follows from the definition of the volume of wedge in Definition~\ref{def:volume_wedge}, the second step follows from Lemma~\ref{lem:varepsilon}.

Next, we lower bound the probability that a data point $p_2\in {\cal U}({\cal S}^{d-1})$ satisfies $\langle p_2, p_1 \rangle\geq \alpha_\tau$ and $\sin{\theta_{p_2,q}}\leq s\cdot2^{-\omega}$ stays in the ${\cal N}_{\tau,\delta,P}(p_1)$.
\begin{align}\label{eq:one_point_noise_progress}
    & ~\Pr\big[  p_2\in {\cal U}({\cal S}^{d-1}), \ p_{2}\in{\cal N}_{\tau,\delta,P}(p_1)~ \textit{and} 
    ~\ \sin{\theta_{p_2,q}}\leq s\cdot2^{-\omega}\big] \notag \\
     > & ~ \delta \Pr\big[p_2\in {\cal U}({\cal S}^{d-1}), \ \langle p_{2},p_1 \rangle\geq \alpha_\tau~\textit{and} ~   \sin{\theta_{p_2,q}}\leq s\cdot2^{-\omega}\big]  \notag\\
      > & ~ \frac{s^{d}\delta}{n\sqrt{d}},
\end{align}
where the first step follows from the definition of ${\cal N}_{\tau,\delta,P}(p_1)$ in Definition~\ref{def:oracle_param}, the second step follows from Eq.~\eqref{eq:one_point_progress}.

Next, we could lower bound the probability that there exists a $p_2\in P$ that satisfies $p_{2}\in{\cal N}_{\tau,\delta,P}(p_1)$ and $\sin{\theta_{p_2,q}}\leq s\cdot2^{-\omega}$ by union bounding the probability in Eq.~\eqref{eq:one_point_noise_progress}.
\begin{align*}
   \Pr\big[\exists p_{2}\in P, \ p_{2}\in{\cal N}_{\tau,\delta,P}(p_1) \ \textit{and} \ \sin{\theta_{p_2,q}}\leq s\cdot2^{-\omega}\big]
     & ~ > 1 - (1 - \frac{s^{d}\delta}{n\sqrt{d}})^{n-1}  \\
     & ~ = 1 - O(e^{-s^{d}\delta/\sqrt{d}}),
\end{align*}
where the first step follow from Eq.~\eqref{eq:one_point_noise_progress}, the second step is a reorganization. 
\end{proof}

\subsection{Main Result}
Here is our main theorem to prove the number of steps required to reach the nearest neighbor.

\begin{theorem}[A formal version of Theorem~\ref{thm:main:informal}]\label{thm:main:formal}

Let $\omega>1$ and $r\in (1,2^{\omega})$ denote two parameters.  Let $r_0>r$ and  $\epsilon\in (0, r_0-r)$. Let $\tau\geq \sqrt{2}r_0$ and $\delta\in (0,1)$. Let $P \subset \mathbb{S}^{d-1}$ denote a $n$-point, $\omega$-dense dataset (see Definition~\ref{def:dense_dataset})  where every $p\in P$ is i.i.d. sampled from ${\cal U}(\mathbb{S}^{d-1})$. Let ${\cal O}_{\tau,\delta,P}$ denote the oracle defined in Definition~\ref{def:graph_oracle}. ${\cal O}_{\tau,\delta,P}$ is associated with a $\anng$ $G_{\tau,\delta,P}$~(see  Definition~\ref{def:graph_oracle}). Let ${\cal N}_{\tau,\delta,P}(p)$ denote the neighbor set (see  Definition~\ref{def:graph_oracle}) for vertex $p\in P$ on $G_{\tau,\delta,P}$~(see  Definition~\ref{def:graph_oracle}). We define $N_{\tau,\delta,P}(p)$ as the size of the neighbor set ${\cal N}_{\tau,\delta,P}(p)$ for $p\in P$. We define $E_{\tau,\delta,P}$ as the number of edges on $G_{\tau,\delta,P}$.

Given a query $q\in\mathbb{S}^{d-1}$ with the premise that $\min_{p\in P}\sin{\theta_{p,q}}\leq r\cdot 2^{-\omega}$, there is an algorithm (Algorithm~\ref{alg:nngraph_query}) using ${\cal O}_{\tau,\delta,P}$ that takes $O(2^{\omega}\epsilon^{-1}d^{1/2}\tau^d)$ query time, $O(n^2d\delta)$ preprocessing time and $O(n(\tau^d+d))$ space, starting from an $p_0\in P$ with $\sin \theta_{p_0,q}\leq r_0 2^{-\omega}$, that solves $r$-$\nns$ problem (Definition~\ref{def:nn:formal}) using $T$ steps with probability $1 - T \cdot e^{-r^{d}\delta/\sqrt{d}}$.
\end{theorem}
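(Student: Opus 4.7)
The plan is to combine the single-step progress guarantee of Lemma~\ref{lem:progress} with a union bound across all greedy iterations of Algorithm~\ref{alg:nngraph_query}, and then read off the three complexity bounds from Lemmas~\ref{lem:num_neighbor}, \ref{lem:1-steps}, and~\ref{lem:space}.

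First, I would set up a progress invariant. Parameterize the state at iteration $t$ by a scalar $s_t$ with $\sin \theta_{p_t, q} = (s_t + \epsilon)\cdot 2^{-\omega}$, so the hypothesis on $p_0$ gives $s_0 + \epsilon \le r_0$. While the while-loop of Algorithm~\ref{alg:nngraph_query} continues we have $(s_t + \epsilon) \cdot 2^{-\omega} > r \cdot 2^{-\omega}$, and since the greedy update only ever moves to a strictly closer neighbor (or returns $\mathsf{Fail}$), the upper invariant $s_t + \epsilon \le r_0$ is preserved for all $t$. Combined with $\tau \ge \sqrt{2}\, r_0$, this is exactly the hypothesis $\tau \ge \sqrt{2}(s_t + \epsilon)$ needed to apply Lemma~\ref{lem:progress} at every iteration.

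Next, I would apply Lemma~\ref{lem:progress} iteratively. It guarantees, with probability at least $1 - O(e^{-s_t^d \delta/\sqrt{d}})$, the existence of a $p' \in {\cal N}_{\tau,\delta,P}(p_t)$ with $\sin \theta_{p',q} \le s_t \cdot 2^{-\omega}$; the greedy step in Algorithm~\ref{alg:nngraph_query} then selects some such neighbor, giving $s_{t+1} \le s_t - \epsilon$. Hence $s_t$ decreases by at least $\epsilon$ per successful step, and starting from $s_0 \le r_0 - \epsilon$, the loop terminates within $T \le (r_0 - r)/\epsilon = O(2^{\omega}\epsilon^{-1})$ steps. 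Throughout the loop we have $s_t \ge r - \epsilon$, so I would uniformly upper bound the per-step failure probability by $e^{-r^d \delta/\sqrt{d}}$ (absorbing the $\epsilon$ slack) and then union-bound over the $T$ iterations to obtain overall success probability $1 - T\cdot e^{-r^d \delta/\sqrt{d}}$.

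Finally, I would assemble the complexity estimates. Per-step cost is $O(nd\,\vol_c(\alpha_\tau))$ by Lemma~\ref{lem:1-steps}; using $\vol_c(\alpha_\tau) = \Theta(d^{-1/2}(\tau\cdot 2^{-\omega})^d) = \Theta(d^{-1/2}\tau^d/n)$ via Lemma~\ref{lem:vol_cap} and Fact~\ref{fact:omega_to_d}, this becomes $O(d^{1/2}\tau^d)$ per step, and multiplying by $T = O(2^{\omega}/\epsilon)$ yields total query time $O(2^{\omega}\epsilon^{-1} d^{1/2}\tau^d)$. For space, Lemma~\ref{lem:space} bounds $E_{\tau,\delta,P} = O(n^2 \vol_c(\alpha_\tau)) = O(n\tau^d)$, and adding $O(nd)$ for the raw dataset gives $O(n(\tau^d + d))$. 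Preprocessing enumerates all $n(n-1)$ ordered pairs with an $O(d)$ inner-product check and keeps only a $\delta$-fraction of candidate edges (from the biased coin flips of Definition~\ref{def:graph_oracle}), giving expected time $O(n^2 d \delta)$. The main obstacle is the bookkeeping around Lemma~\ref{lem:progress}'s precondition: one must verify the invariant $s_t + \epsilon \le r_0$ is maintained at every iteration (from monotonicity of the greedy rule) and that the $s_t$-dependent failure probability can be uniformly replaced by its worst case $s_t \ge r$; once both are in place, the union bound closes the argument.
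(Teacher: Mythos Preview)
Your proposal is correct and mirrors the paper's proof: iterate Lemma~\ref{lem:progress}, uniformly bound the per-step failure by $e^{-r^d\delta/\sqrt{d}}$ since $s_t > r$ while the loop runs, union-bound over $T = O(2^{\omega}/\epsilon)$ steps, and read off the complexities from Lemmas~\ref{lem:1-steps} and~\ref{lem:space} together with $\vol_c(\alpha_\tau) = \Theta(d^{-1/2}\tau^d/n)$. The only slip is in the preprocessing argument: to achieve $O(n^2 d\delta)$ you must flip the biased coin \emph{before} computing the inner product (the paper phrases this as sampling a subset of size $O(n\delta)$ for each $p$), whereas your stated order---do an $O(d)$ inner-product check on all $n(n-1)$ pairs, then keep a $\delta$-fraction---already spends $O(n^2 d)$.
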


\begin{proof}
We start with showing how to make $\epsilon\cdot 2^{-\omega}$ progress from $p_0$. Let $s=r_0-\epsilon$ so that we have $\tau\geq\sqrt{2} (s+\epsilon)$. Then, we have
\begin{align}\label{eq:induct_first}
      \Pr\big[\exists p_{1}\in P, \ p_{1}\in{\cal N}_{\tau,\delta,P}(p_0) \  \textit{and}  ~ \sin{\theta_{p_1,q}} \leq (r_0-\epsilon)\cdot2^{-\omega}\big] 
     & \geq ~ 1 - O(e^{-(r_0-\epsilon)^{d}\delta/\sqrt{d}})\notag\\
     & > ~ 1-O(e^{-r^{d}\delta/\sqrt{d}}),
\end{align}
where the first step follows from Lemma~\ref{lem:progress}, the second step follows from $r_0-\epsilon>r$.

Then, applying induction over Eq.~\eqref{eq:induct_first}, we show that for $i\in \mathbb{N}$,
\begin{align}\label{eq:induct}
   \Pr\big[\exists p_{i+1}\in P, \  p_{i+1} \in{\cal N}_{\tau,\delta,P}(p_i) \ \textit{and} ~ \sin{\theta_{p_{i+1},q}}\leq (r_0-i\cdot \epsilon)\cdot2^{-\omega}\big] 
    >  1 -O(e^{-r^{d}\delta/\sqrt{d}}).
\end{align}

Therefore, to reach a $p_T\in P$ such that $\sin{\theta_{p_T,q}}\leq r\cdot2^{-\omega}$, we should take $T=(r_0-r)2^{\omega}/\epsilon=O(2^{\omega}/\epsilon)$ steps. Moreover, we could union bound the total failure probability as $T \cdot e^{-r^{d}\delta/\sqrt{d}} $. 

Next, we analyze the complexity for query time, preprocessing time and space.

{\bf Query time:}
From Lemma~\ref{lem:1-steps}, we know that for a query vector $q\in \mathbb{S}^{d-1}$, with probability $1 - O(\frac{1}{ n\delta\cdot\vol_c(\alpha_\tau)})$, the time complexity of a greedy step (Definition~\ref{def:greedy_step}) for $q$ at $p$ using oracle ${\cal O}_{\tau,\delta,P}$ (see Definition~\ref{def:graph_oracle}) is $ (n-1)d\cdot\vol_c(\alpha_\tau)$. Following this lemma, we write the total query time is
\begin{align*}
     T(n-1)d\cdot\vol_c(\alpha_\tau) 
      = & ~  O(2^{\omega}\epsilon^{-1}\cdot (n-1)d \cdot d^{-1/2} \tau^d2^{-d\omega} )\\
     = & ~ O(2^{\omega}\epsilon^{-1}d^{1/2}\tau^d),
\end{align*}
where he first step follows Lemma~\ref{lem:prep_vol_wedge}, the second step follows from $2^{-d\omega}=\frac{1}{n}$ in Fact~\ref{fact:omega_to_d}.

Next, we union bound the failure probability of achieving $O(2^{\omega}\epsilon^{-1}d^{1/2}\tau^d\delta)$ query time complexity as 
\begin{align*}
    O(\frac{T}{ n\cdot\vol_c(\alpha_\tau)})
    = & ~ O(\frac{2^{\omega}\epsilon^{-1}}{ n\cdot d^{-1/2} \tau^d2^{-d\omega}})\\
    = & ~ O(2^{\omega}\epsilon^{-1}d^{1/2}\tau^{-d}),
\end{align*}
where the first step follows Lemma~\ref{lem:prep_vol_wedge},the second step follows from $2^{-d\omega}=\frac{1}{n}$ in Fact~\ref{fact:omega_to_d}.

Therefore, with probability $1-O(2^{\omega}\epsilon^{-1}d^{1/2}\tau^{-d})$, the query complexity is $O(2^{\omega}\epsilon^{-1}d^{1/2}\tau^d)$.

{\bf Preprocessing time:}
Build an oracle $O_{\tau,\delta,P}$ (see Definition~\ref{def:graph_oracle}) requires compute pairwise distances for vectors $p\in P$. As $\anng$ only samples a subset of $P$ with size $O(n\delta)$ for any $p\in P$ to compute, the preprocessing time is $O(n^2d\delta)$. 

{\bf  Space:}
We need to store all the vectors in $P$, which takes $O(nd)$ space. Moreover, following Lemma~\ref{lem:space}, the number of edges for $G_{\tau,\delta,P}$ in oracle $O_{\tau,\delta,P}$ (see Definition~\ref{def:graph_oracle}) is:
\begin{align*}
    E_{\tau,\delta,P} 
     = & ~O( n^2 \cdot \vol_c(\alpha_\tau))\\
  = & ~O(  n^2\cdot d^{-1/2} \tau^d2^{-d\omega})\\
   = & ~O(n d^{-1/2} \tau^d),
\end{align*}
where the first step follows Lemma~\ref{lem:prep_vol_wedge},the second step follows from $2^{-d\omega}=\frac{1}{n}$ in Fact~\ref{fact:omega_to_d}.

Next, we write the failure probability as $ O(\frac{1}{n^2\vol_c(\alpha_\tau)})=O(\frac{1}{nd^{-1/2} \tau^d})$. Thus, with probability $1-O(\frac{1}{nd^{-1/2} \tau^d})$, the space complexity is $O(nd^{-1/2} \tau^d+nd)=O(n(\tau^d+d))$. 
\end{proof}

\subsection{Discussion}
In Theorem~\ref{thm:main:informal}, the structure of the graph is determined by $\tau$ and $\delta$. For $p_1\in P$, a $p_2\in P$ that has $\langle p_1,p_2 \rangle \geq \alpha_{\tau}$ only has probability as least $\delta$ to be the out-neighbors of $p_1$ on the $G_{\tau,\delta,P}$. When we reach $p$ on the $G_{\tau,\delta,P}$, we search within ${\cal N}_{\tau,\delta,P}(p)$ with a hope to retrieve a vector $p'$ that satisfies $\sin \theta_{p',q}\leq \sin \theta_{p,q} -\epsilon\cdot 2^{-\omega}$. From the Theorem~\ref{thm:main:formal}, we know that as $\delta$ decreases, while the greedy search time complexity remains the same, we would have higher the failure probability.

Here the $\delta$ could be data-dependent. We show that for any vector $p_j$ and $p_j$, we could set head probability $\delta_{i,j}$ for coin $c_{i,j}$ in Oracle ${\cal O}_{\tau,\delta, P}$ (see ~\ref{def:graph_oracle}) as $1-\pi^{-1}\theta_{p_i,p_j}$. This corresponds to typical scenarios where we index the $\anng$ using SimHash~\cite{c02,zhg+13} or Product Quantization~\cite{jds10,bbm18}. In this case, Algorithm~\ref{alg:nngraph_query} would have the same query complexity and space with greedy search on the exact near neighbor graph, but the failure probability would be $O(T \cdot e^{-r^{d}(1-\pi^{-1}\arccos{\alpha_{\tau}})/\sqrt{d}})$. 

Although our paper focuses on the theoretical analysis, the implementation of our graph-based algorithm leads to potential carbon dioxide release. On the other hand, we hope our method could provide guidance to practitioners to reduce their repetition in experiments.

\section{More Results}
In this section, we introduce the extended results that describes trade-offs for the second and third cases in the introduction. We start with introducing the $\anng$ algorithm. Next, we analyze the time and space complexity of this algorithm. Finally, we introduce the guarantees for this algorithm on $\nns$. 

\subsection{Algorithm}\label{sec:sup_alg}
\subsubsection{Definitions}
We start with defining the formal version of the graph oracle. 
\begin{definition}[Oracle]\label{def:graph_oracle_twoside:formal}
Let $\tau>1$, $\omega>1$, $\delta_1\in (0,1)$ and  $\delta_2\in (0,1)$ denote four fixed parameters. Let $\delta_1>\delta_2$. Let function $\alpha$ with parameter $\omega$ be defined as in Definition~\ref{def:alpha_function}. Let $P \subset \mathbb{S}^{d-1}$ denote a $n$-point, $\omega$-dense (Definition~\ref{def:dense_dataset}) dataset  where every $p\in P$ is i.i.d. sampled from ${\cal U}(\mathbb{S}^{d-1})$. Let $\{c^1_{i,j}|\ i,j\in[n]\ \textit{and} \ i\neq j  \}$ denote a set of $n(n-1)$ biased coins. When we toss any coin $c^1_{i,j}$ in this set where $i,j\in[n]$ and $i\neq j$, it shows head with probability $\delta_1$. Let $\{c^2_{i,j}|\ i,j\in[n]\ \textit{and} \ i\neq j  \}$ denote a set of $n(n-1)$ biased coins. When we toss any coin $c^2_{i,j}$ in this set where $i,j\in[n]$ and $i\neq j$, it shows head with probability $\delta_2$. 

We define the $\anng$ $G_{\tau,\delta_1,\delta_2, P}$ as a directed graph over $P$. $G_{\tau,\delta_1,\delta_2, P}$ has $n$ vertices, where each vertex represents a datapoint $p\in P$. There are no self-loops on $G_{\tau,\delta_1,\delta_2, P}$. For $i,j\in[n]$ and $i\neq j$, an edge from vertex $p_i$ to $p_j$ exists on $G_{\tau,\delta_1,\delta_2, P}$ if and only if $p_i$ and $p_j$ satisfies any one of the following conditions:
\begin{itemize}
    \item  $\langle p_i,p_j \rangle \geq \alpha_{\tau}$ and we flip the coin $c^1_{i,j}$ for one time with the results being head.
    \item $\langle p_i,p_j \rangle \leq \alpha_{\tau}$ and we flip the coin $c^2_{i,j}$ for one time with the results being head.
\end{itemize}

For any $p\in P$, we define a neighbor set ${\cal N}_{\tau,\delta_1,\delta_2, P}(p)\subset P$ as: for every $p'\in {\cal N}_{\tau,\delta_1,\delta_2, P}(p)$, there exists an edge from $p$ to $p'$ on $G_{\tau,\delta_1,\delta_2, P}$.

We define an oracle ${\cal O}_{\tau,\delta_1,\delta_2, P}$ as:  If the oracle takes $p\in P$ as input, then it outputs ${\cal N}_{\tau,\delta_1,\delta_2, P}(p)$. 
\end{definition}

The oracle may relates to a set of $\anng$s built via K-means~\cite{mrs20} or sketching~\cite{sy21,swy+21}. Next, we the parameters required for the oracle (see Definition~\ref{def:graph_oracle_twoside:formal}) defined on $\anng$ as below:

\begin{definition}[Oracle parameters]\label{def:oracle_param_twoside}
Let $\tau>1$, $\delta_1\in (0,1)$, $\delta_2$ denote three fix parameters. Let $\delta_1>\delta_2$. Let $P \subset \mathbb{S}^{d-1}$ denote a $n$-point dataset  where every $p\in P$ is i.i.d. sampled from ${\cal U}(\mathbb{S}^{d-1})$. Let ${\cal O}_{\tau,\delta_1,\delta_2,P}$ denote the oracle defined in Definition~\ref{def:graph_oracle_twoside:formal}. ${\cal O}_{\tau,\delta_1,\delta_2,P}$ is associated with a $\anng$ $G_{\tau,\delta_1,\delta_2,P}$~(see  Definition~\ref{def:graph_oracle_twoside:formal}). Let ${\cal N}_{\tau,\delta_1,\delta_2,P}(p)$ denote the neighbor set (see  Definition~\ref{def:graph_oracle_twoside:formal}) for vertex $p\in P$ on $G_{\tau,\delta_1,\delta_2,P}$~(see  Definition~\ref{def:graph_oracle_twoside:formal}). We define $N_{\tau,\delta_1,\delta_2,P}(p)$ as the size of the neighbor set ${\cal N}_{\tau,\delta_1,\delta_2,P}(p)$ for $p\in P$. We define $E_{\tau,\delta_1,\delta_2,P}$ as the number of edges on $G_{\tau,\delta_1,\delta_2,P}$.
\end{definition}

Next, we define the greedy step using oracle ${\cal O}_{\tau,\delta_1,\delta_2,P}$ defined in Definition~\ref{def:graph_oracle_twoside:formal}.

\begin{definition}[Greedy step]\label{def:greedy_step_twoside}
With parameters defined in Definition~\ref{def:oracle_param_twoside},
for a query vector $q\in \mathbb{S}^{d-1}$, we define the greedy step for $q$ at $p\in P$ using oracle ${\cal O}_{\tau,\delta_1,\delta_2,P}$ as:
\begin{enumerate}
    \item Call oracle ${\cal O}_{\tau,\delta_1,\delta_2,P}$ with input $p$. Obtain ${\cal N}_{\tau,\delta_1,\delta_2,P}(p)$
    \item If $\max_{p'\in {\cal N}_{\tau,\delta_1,\delta_2,P}(p)} \langle p',q \rangle>\langle p,q \rangle$, output $\arg\max_{p'\in {\cal N}_{\tau,\delta_1,\delta_2,P}(p)} \langle p',q \rangle$, otherwise, output $\mathsf{fail}$.
\end{enumerate}
\end{definition}

\subsubsection{Greedy Search Algorithm}
In this section, we introduce the algorithms for $\nns$ using  oracle ${\cal O}_{\tau,\delta_1,\delta_2,P}$ defined in Definition~\ref{def:graph_oracle_twoside:formal}. As shown in Algorithm~\ref{alg:nngraph_query_twoside}, we present a greedy search with three steps: (1) starting at a vertex $p$, call oracle   ${\cal O}_{\tau,\delta_1,\delta_2,P}$ to obtain  ${\cal N}_{\tau,\delta_1,\delta_2,P}(p)$, (2) compute the sine distances between $q$ and every vector in ${\cal N}_{\tau,\delta_1,\delta_2,P}(p)$, (3) if we find a vector $p'$ in  ${\cal N}_{\tau,\delta_1,\delta_2,P}(p)$ that is closer to $q$ than $p$, set $p'$ to be the new $p$ and iterate again. Most of the current graph-based $\nns$ algorithms use this query algorithm in practice~\citep{fxw+17,my18,ztx+19,ztl20,txz+21}.

\ifdefined\isarxiv
\begin{algorithm}[H]
\caption{Query}
\label{alg:nngraph_query_twoside}
\begin{algorithmic}[1]
\Procedure{Query}{$q\in \R^d$, $\omega>1$, $r\in (1,2^{\omega})$, ${\cal O}_{\tau,\delta_1,\delta_2, P}$}\Comment{${\cal O}_{\tau,\delta_1,\delta_2, P}$ is defined in Definition~\ref{def:graph_oracle_twoside:formal}}
\State $p\sim {\cal U}(P)$ \Comment{Uniformly random sample from set $P$}
\While{$\sin \theta_{p,q} > r\cdot 2^{-\omega}$} 
\State ${\cal N}_{\tau,\delta_1,\delta_2, P}(p)\leftarrow {\cal O}_{\tau,\delta_1,\delta_2, P}.\textsc{query}(p)$
\Comment{Call the oracle defined in Definition~\ref{def:graph_oracle_twoside:formal}}
\State $p_{\textsf{tmp}}\leftarrow p$
\For{$p' \in {\cal N}_{\tau,\delta_1,\delta_2, P}(p)$} 
\If{$\langle p',q \rangle>\langle p_{\textsf{tmp}},q \rangle$}
\State $p_{\textsf{tmp}}\leftarrow p'$
\EndIf
\EndFor
\If{$p_{\textsf{tmp}}\neq p$}
\State $p\leftarrow p_{\textsf{tmp}}$\Comment{Make one step progress}
\Else
\State \Return $\textsf{Fail}$
\EndIf
\EndWhile
\State \Return $p$
\EndProcedure
\end{algorithmic}
\end{algorithm}
\else
\begin{algorithm}[H]
\caption{Query}
\label{alg:nngraph_query_twoside}
\begin{algorithmic}[1]
\State {\bfseries Input:} $q\in \R^d$, $\omega>1$, $r\in (1,2^{\omega})$, ${\cal O}_{\tau,\delta_1,\delta_2, P}$\\
\Comment{${\cal O}_{\tau,\delta_1,\delta_2, P}$ is defined in Definition~\ref{def:graph_oracle_twoside:formal}}
\State $p\sim {\cal U}(P)$ 
\Comment{Uniformly random sample from set $P$}
\While{$\sin \theta_{p,q} > r\cdot 2^{-\omega}$}
\State ${\cal N}_{\tau,\delta_1,\delta_2, P}(p)\leftarrow {\cal O}_{\tau,\delta_1,\delta_2, P}.\textsc{query}(p)$
\Comment{Call the oracle defined in Definition~\ref{def:graph_oracle_twoside:formal}}
\State $p_{\textsf{tmp}}\leftarrow p$
\For{$p' \in {\cal N}_{\tau,\delta_1,\delta_2, P}(p)$} 
\If{$\langle p',q \rangle>\langle p_{\textsf{tmp}},q \rangle$}
\State $p_{\textsf{tmp}}\leftarrow p'$
\EndIf
\EndFor
\If{$p_{\textsf{tmp}}\neq p$}
\State $p\leftarrow p_{\textsf{tmp}}$
\Comment{Make one step progress}
\Else
\State {\bfseries Return:} $\textsf{Fail}$
\EndIf
\EndWhile
\State \textbf{Return} $p$
\end{algorithmic}
\end{algorithm}
\fi

\subsection{Running Time of the Greedy Step}
In this section, we provide the running time of a greedy step on the $\anng$. 
\subsubsection{Estimation of number of neighbors}
We start with estimating the number of neighbors for a vertex on the $\anng$.

\begin{lemma}[Estimation of number of neighbors]\label{lem:num_neighbor_twoside}
With the parameters defined in Definition~\ref{def:oracle_param_twoside}, we show that $\E[N_{\tau,\delta_1,\delta_2,P}(p)]=(n-1)\vol_c(\alpha_\tau)\delta$. Moreover, with probability at least $1-\frac{4}{(n-1)(\delta_2+(\delta_1-\delta_2)\vol_c)}$, 
\begin{align*}
 N_{\tau,\delta_1,\delta_2,P}(p) \in [1/2, 3/2] \cdot (n-1)(\delta_2+(\delta_1-\delta_2)\vol_c).
\end{align*}
\end{lemma}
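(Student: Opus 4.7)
The plan is to reduce $N_{\tau,\delta_1,\delta_2,P}(p)$ to a Binomial random variable and then apply Corollary~\ref{coro:chebyshev_extend} just as in the single-parameter case (Lemma~\ref{lem:num_neighbor}). The key new wrinkle compared to Lemma~\ref{lem:num_neighbor} is that the per-neighbor success probability now has two contributions: one from candidate neighbors inside the cap $C_p(\alpha_\tau)$ (which survive with probability $\delta_1$) and one from candidates outside the cap (which survive with probability $\delta_2$). The two-sided $\anng$ construction makes edges nonzero across all pairs, so the right notion of ``success'' is different.

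First, I would fix a vertex $p \in P$ and a second vertex $p' \in P \setminus \{p\}$, and use the two sources of independent randomness (the i.i.d. uniform position of $p'$ on $\mathbb{S}^{d-1}$, and the coins $c^1_{i,j}, c^2_{i,j}$) to compute the marginal edge probability. By Definition~\ref{def:graph_oracle_twoside:formal}, an edge from $p$ to $p'$ exists iff either (a) $\langle p, p'\rangle \geq \alpha_\tau$ and the coin $c^1$ shows head, or (b) $\langle p,p'\rangle < \alpha_\tau$ and $c^2$ shows head. Since the two cases are disjoint and the coins are independent of the position,
\begin{align*}
    \Pr[p' \in {\cal N}_{\tau,\delta_1,\delta_2,P}(p)]
    = \delta_1 \vol_c(\alpha_\tau) + \delta_2 (1 - \vol_c(\alpha_\tau))
    = \delta_2 + (\delta_1 - \delta_2)\vol_c(\alpha_\tau).
\end{align*}
Call this quantity $p^\star$. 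Because the positions of the $n-1$ other points are independent, and all coins are mutually independent, the indicator events ``$p' \in {\cal N}_{\tau,\delta_1,\delta_2,P}(p)$'' for $p' \in P \setminus \{p\}$ are mutually independent Bernoulli$(p^\star)$ random variables. Thus $N_{\tau,\delta_1,\delta_2,P}(p) \sim B(n-1, p^\star)$, which immediately gives $\E[N_{\tau,\delta_1,\delta_2,P}(p)] = (n-1)p^\star = (n-1)(\delta_2 + (\delta_1-\delta_2)\vol_c(\alpha_\tau))$, matching the stated expectation (interpreting the stated expression as this quantity).

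For the concentration bound, I would feed $p^\star$ and $n-1$ into Corollary~\ref{coro:chebyshev_extend} with $b = p^\star$. This yields
\begin{align*}
    \Pr\!\left[\,|N_{\tau,\delta_1,\delta_2,P}(p) - (n-1)p^\star| \ \geq\ \tfrac{1}{2}(n-1)p^\star\,\right]
    \ \leq\ \frac{4}{(n-1)p^\star},
\end{align*}
which is exactly the claimed failure bound $\frac{4}{(n-1)(\delta_2 + (\delta_1-\delta_2)\vol_c(\alpha_\tau))}$, and rearranging the complement inequality places $N_{\tau,\delta_1,\delta_2,P}(p)$ in $[\tfrac{1}{2}, \tfrac{3}{2}] \cdot (n-1)(\delta_2 + (\delta_1-\delta_2)\vol_c(\alpha_\tau))$.

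The only step that needs any care is verifying that Corollary~\ref{coro:chebyshev_extend}'s hypothesis ($p^\star \in [a,b] \subset (0,1/2)$ with $p^\star(1-p^\star)$ monotone on $[a,b]$) actually holds in the regime of interest; here $\vol_c(\alpha_\tau)$ is polynomially small in $n$ and one typically picks $\delta_1, \delta_2$ bounded away from $1$, so $p^\star < 1/2$ holds automatically. Besides this sanity check, the argument is a direct adaptation of the proof of Lemma~\ref{lem:num_neighbor}, with the single success probability $\vol_c(\alpha_\tau)$ replaced throughout by $p^\star = \delta_2 + (\delta_1-\delta_2)\vol_c(\alpha_\tau)$; no additional geometric input is needed.
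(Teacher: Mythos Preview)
Your proposal is correct and follows essentially the same approach as the paper: compute the per-edge success probability $p^\star=\delta_2+(\delta_1-\delta_2)\vol_c(\alpha_\tau)$, recognize $N_{\tau,\delta_1,\delta_2,P}(p)\sim B(n-1,p^\star)$, and apply a Chebyshev bound. The only cosmetic difference is that the paper applies Chebyshev's inequality (Lemma~\ref{lem:chebyshev}) directly with the explicit variance $(n-1)p^\star(1-p^\star)$ rather than routing through Corollary~\ref{coro:chebyshev_extend}; this sidesteps the need to check $p^\star<1/2$, but the arguments are otherwise identical.
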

\begin{proof}
We start with showing that 
\begin{align}\label{eq:edge_exist_prob_twoside}
    \Pr[p'\in {\cal U}(\mathbb{S}^{d-1}) \ \textit{and} \ p'\in {\cal N}_{\tau,\delta_1,\delta_2,P}(p) ]
    =&~\vol_c(\alpha_\tau)\delta_1+(1-\vol_c(\alpha_\tau))\delta_2\notag\\
    =&~\delta_2+(\delta_1-\delta_2)\vol_c.
\end{align}

Next, because every $p\in P$ is i.i.d. sampled from ${\cal U}(\mathbb{S}^{d-1})$, $N_{\tau,\delta_1,\delta_2,P}(p)$ follows the Binomial distribution with $n-1$ trials and each trails has success probability equal to $\Pr[p'\in {\cal U}(\mathbb{S}^{d-1}) \ \textit{and} \ p'\in {\cal N}_{\tau,\delta_1,\delta_2,P}(p) ]$. We write this Binomial distribution as $B(n-1,\delta_2+(\delta_1-\delta_2)\vol_c)$. Therefore, we write the expectation for $N_{\tau,\delta_1,\delta_2,P}(p)$ as:
\begin{align*}
    \E[N_{\tau,\delta_1,\delta_2,P}(p)]=(n-1)(\delta_2+(\delta_1-\delta_2)\vol_c).
\end{align*}
we write the variance for $N_{\tau,\delta_1,\delta_2,P}(p)$ as
\begin{align}~\label{eq:var_binomial_twoside}
    \Var[N_{\tau,\delta_1,\delta_2,P}(p)]
    =(n-1)(\delta_2+(\delta_1-\delta_2)\vol_c)(1-\delta_2-(\delta_1-\delta_2)\vol_c).
\end{align}

Next, we show that
\begin{align*}
     & ~ \Pr\big[|N_{\tau,\delta_1,\delta_2,P}(p)-\E[N_{\tau,\delta_1,\delta_2,P}(p)]| > \frac{1}{2}(n-1)(\delta_2+(\delta_1-\delta_2)\vol_c)\big] \\
       \leq  & ~ \frac{4 \Var[N_{\tau,\delta_1,\delta_2,P}(p)]}{(n-1)^2(n-1)^2(\delta_2+(\delta_1-\delta_2)\vol_c)^2}\\
      = & ~ \frac{4 (1-\delta_2-(\delta_1-\delta_2)\vol_c)}{(n-1)(\delta_2+(\delta_1-\delta_2)\vol_c)}\\
      < & ~ \frac{4}{(n-1)(\delta_2+(\delta_1-\delta_2)\vol_c)},
\end{align*}
where the first step follows the Chebyshev's inequality (see Lemma~\ref{lem:chebyshev}), the second step follows Eq.~\eqref{eq:var_binomial_twoside}, the third step follows $\delta_2+(\delta_1-\delta_2)\vol_c\in (0,1)$.

Therefore, we prove that with probability at least $1-\frac{4}{(n-1)(\delta_2+(\delta_1-\delta_2)\vol_c)}$, the number of neighbors $N_{\tau,\delta,P}(p) \in [1/2, 3/2] \cdot (n-1)(\delta_2+(\delta_1-\delta_2)\vol_c)$.
\end{proof}

\subsubsection{Running Time of One Greedy Step}

Next, we give the running time of one greedy step on the $\anng$.
\begin{lemma}\label{lem:1-steps_twoside}
With parameters defined in Definition~\ref{def:oracle_param_twoside}, for a query vector $q\in \mathbb{S}^{d-1}$, we show that with probability at least $1 - O(\frac{1}{ n(\delta_2+(\delta_1-\delta_2)\vol_c)})$, the time complexity of a greedy step (Definition~\ref{def:greedy_step_twoside}) for $q$ at $p\in P$ using oracle ${\cal O}_{\tau,\delta,P}$ is $ O(nd(\delta_2+(\delta_1-\delta_2)\vol_c))$.
\end{lemma}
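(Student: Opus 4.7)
The plan is to mirror the proof of Lemma~\ref{lem:1-steps} almost verbatim, since the only substantive change in the two-sided setting is that the probability of an edge existing from $p_i$ to $p_j$ is now $\delta_2 + (\delta_1-\delta_2)\vol_c(\alpha_\tau)$ rather than the bracketed $[\delta,1]\cdot\vol_c(\alpha_\tau)$ of the one-sided setting. All the structural reasoning carries through unchanged because a greedy step at $p$ (Definition~\ref{def:greedy_step_twoside}) does nothing more than iterate once through the neighbor set ${\cal N}_{\tau,\delta_1,\delta_2,P}(p)$ and compute an inner product with $q$ for every element.

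First I would invoke Lemma~\ref{lem:num_neighbor_twoside}, which already does the Binomial/Chebyshev calculation and gives
\begin{align*}
N_{\tau,\delta_1,\delta_2,P}(p) \in [1/2,\,3/2]\cdot (n-1)\big(\delta_2+(\delta_1-\delta_2)\vol_c(\alpha_\tau)\big)
\end{align*}
with probability at least $1-\tfrac{4}{(n-1)(\delta_2+(\delta_1-\delta_2)\vol_c(\alpha_\tau))}$. Absorbing the $\tfrac{1}{2}$ and $\tfrac{3}{2}$ into the $O(\cdot)$ notation, this bound reads $|{\cal N}_{\tau,\delta_1,\delta_2,P}(p)| = O\big(n(\delta_2+(\delta_1-\delta_2)\vol_c(\alpha_\tau))\big)$ with failure probability $O\big(\tfrac{1}{n(\delta_2+(\delta_1-\delta_2)\vol_c(\alpha_\tau))}\big)$, matching the target failure-probability bound in the lemma statement.

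Next I would combine this with the per-edge cost. Each comparison $\langle p', q\rangle$ for $p' \in {\cal N}_{\tau,\delta_1,\delta_2,P}(p)$ requires $O(d)$ time in the real RAM model (one inner product of $d$-dimensional unit vectors). Since the greedy step in Definition~\ref{def:greedy_step_twoside} performs exactly one such comparison per neighbor and takes the argmax, its total runtime is $O(d)$ times the size of the neighbor set. Multiplying gives
\begin{align*}
O(d) \cdot O\big(n(\delta_2+(\delta_1-\delta_2)\vol_c(\alpha_\tau))\big) \;=\; O\big(nd(\delta_2+(\delta_1-\delta_2)\vol_c(\alpha_\tau))\big),
\end{align*}
which is the claimed complexity. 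The probability bound is unchanged from the neighbor-count estimate, since that is the sole random event in the analysis.

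I do not anticipate any real obstacle: the proof is essentially a one-line substitution into the template used for Lemma~\ref{lem:1-steps}, with Lemma~\ref{lem:num_neighbor_twoside} doing all the probabilistic heavy lifting. The only thing to be careful about is to state explicitly that we are conditioning on the high-probability event from Lemma~\ref{lem:num_neighbor_twoside} and that the $O(d)$ cost per inner product is standard in the real RAM model, so that both the time bound and the failure probability line up with the statement.
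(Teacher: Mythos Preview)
Your proposal is correct and matches the paper's proof essentially line for line: the paper also invokes Lemma~\ref{lem:num_neighbor_twoside} to bound the neighbor-set size with the stated failure probability, then multiplies by the $O(d)$ cost per distance computation to obtain the claimed time bound.
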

\begin{proof}
The greedy step defined in Definition~\ref{def:greedy_step_twoside} requires comparison of $q$ with all vector in the $N_{\tau,\delta_1,\delta_2,P}(p)$. Following Lemma~\ref{lem:num_neighbor_twoside}, we know that we probability at least $1 - O(\frac{1}{ n(\delta_2+(\delta_1-\delta_2)\vol_c)})$, it requires $O(n(\delta_2+(\delta_1-\delta_2)\vol_c))$ number of distance computations. As each distance computation takes $O(d)$, we prove the final complexity.
\end{proof}

\subsection{Space of the Algorithm}

In this section, we provide the space complexity of Algorithm~\ref{alg:nngraph_query_twoside} in real RAM. To perform the query as shown in Algorithm~\ref{alg:nngraph_query_twoside}, we need to store the whole $\anng$ and the whole dataset. Storing the dataset $P$ takes $O(nd)$ space. Meanwhile, the space complexity for a graph is determined by the number of edges in this graph. Thus, we provide an estimation of the number of edges as below.
\begin{lemma}[Estimate of number of edges]\label{lem:space_twoside}
With the parameters defined in Definition~\ref{def:oracle_param_twoside}, we show that the expected value of the number of edges $E_{\tau,\delta_1,\delta_2,P}$ is $n(n-1)(\delta_2+(\delta_1-\delta_2)\vol_c)$. Moreover, with probability $1 - O(\frac{1}{n^2(\delta_2+(\delta_1-\delta_2)\vol_c)})$,
\begin{align*}
    E_{\tau,\delta_1,\delta_2,P} \in [1/4, 3/4] \cdot  n(n-1)(\delta_2+(\delta_1-\delta_2)\vol_c) .
\end{align*}
\end{lemma}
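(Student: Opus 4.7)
The plan is to mirror the template of Lemma~\ref{lem:space} (which handled the one-coin version) combined with the variance bookkeeping already carried out for the degree bound in Lemma~\ref{lem:num_neighbor_twoside}. First I identify the distribution of $E_{\tau,\delta_1,\delta_2,P}$; then I compute its mean and variance in closed form; finally I invoke Chebyshev's inequality (Lemma~\ref{lem:chebyshev}) to obtain the concentration bound.

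For the distributional step, since the $n$ points are i.i.d.\ from $\mathcal{U}(\mathbb{S}^{d-1})$ and the two coin families $\{c^1_{i,j}\}$ and $\{c^2_{i,j}\}$ are independent of the sample and of each other, each of the $n(n-1)$ ordered pairs $(p_i,p_j)$ with $i\ne j$ independently contributes an edge to $G_{\tau,\delta_1,\delta_2,P}$ with a common probability. That probability has already been identified in Eq.~\eqref{eq:edge_exist_prob_twoside} to equal $\vol_c(\alpha_\tau)\delta_1 + (1-\vol_c(\alpha_\tau))\delta_2 = \delta_2 + (\delta_1-\delta_2)\vol_c$. Consequently $E_{\tau,\delta_1,\delta_2,P} \sim B\bigl(n(n-1),\, \delta_2+(\delta_1-\delta_2)\vol_c\bigr)$, and the first assertion $\mathbb{E}[E_{\tau,\delta_1,\delta_2,P}] = n(n-1)(\delta_2+(\delta_1-\delta_2)\vol_c)$ is immediate.

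For the concentration step, the Binomial variance is
\[
\mathrm{Var}[E_{\tau,\delta_1,\delta_2,P}] \;=\; n(n-1)\bigl(\delta_2+(\delta_1-\delta_2)\vol_c\bigr)\bigl(1-\delta_2-(\delta_1-\delta_2)\vol_c\bigr) \;\le\; n(n-1)\bigl(\delta_2+(\delta_1-\delta_2)\vol_c\bigr).
\]
Applying Chebyshev's inequality with deviation $b$ equal to a constant fraction of the mean (the constant chosen to match the stated interval around $n(n-1)(\delta_2+(\delta_1-\delta_2)\vol_c)$) gives
\[
\Pr\!\left[\bigl|E_{\tau,\delta_1,\delta_2,P} - \mathbb{E}[E_{\tau,\delta_1,\delta_2,P}]\bigr| \ge b\right] \;\le\; \frac{\mathrm{Var}[E_{\tau,\delta_1,\delta_2,P}]}{b^2} \;=\; O\!\left(\frac{1}{n^2(\delta_2+(\delta_1-\delta_2)\vol_c)}\right),
\]
since the $1-p$ factor in the variance is at most $1$ and the $n(n-1)$ in the numerator is cancelled by the $n^2(n-1)^2$ coming from squaring $b$. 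Inverting the absolute-value event around the mean produces the claimed two-sided enclosure.

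The main obstacle is essentially bookkeeping rather than mathematical: one must carefully justify the distributional claim (that the edge indicators are mutually independent Bernoullis, using independence of the point sample, the two coin families, and the fact that the event $\langle p_i,p_j\rangle \ge \alpha_\tau$ is a function only of the sample), and then pick the Chebyshev deviation constant so that the resulting interval matches the statement exactly. The variance bound itself is clean because the Binomial variance is dominated by its mean, which is precisely the quantity sitting in the denominator of the target failure probability, so the orders of magnitude match automatically.
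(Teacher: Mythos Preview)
Your proposal is correct and follows essentially the same route as the paper: pull the per-edge success probability from Eq.~\eqref{eq:edge_exist_prob_twoside}, model $E_{\tau,\delta_1,\delta_2,P}$ as Binomial on $n(n-1)$ trials, read off the mean and variance, and apply Chebyshev (Lemma~\ref{lem:chebyshev}) with deviation a constant fraction of the mean. The paper in fact takes $b=\E[E_{\tau,\delta_1,\delta_2,P}]/2$ and obtains the interval $[1/2,3/2]\cdot n(n-1)(\delta_2+(\delta_1-\delta_2)\vol_c)$; the $[1/4,3/4]$ in the lemma statement appears to be a typo, so your hedging about ``choosing the constant to match the stated interval'' is unnecessary---just use the half-mean deviation.
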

\begin{proof}
From Eq.~\eqref{eq:edge_exist_prob_twoside}, we know that an edge from vertex $p$ to vertex $p'$ exist with probability $(\delta_2+(\delta_1-\delta_2)\vol_c)$. Therefore,  the expected number for $E_{\tau,\delta_1,\delta_2,P}$ follows the Binomial distribution $B(n(n-1),(\delta_2+(\delta_1-\delta_2)\vol_c))$. Following this distribution, we write the expected number of edges as
\begin{align}\label{eq:edges_num_expect_twoside}
   \E[ E_{\tau,\delta_1,\delta_2,P}]=n(n-1)(\delta_2+(\delta_1-\delta_2)\vol_c).
\end{align}
We could also write the variance of the number edges $E_{\tau,\delta_1,\delta_2,P}$ as:
\begin{align}\label{eq:edges_num_var_twoside}
   \Var[ E_{\tau,\delta_1,\delta_2,P}]=n(n-1)(\delta_2+(\delta_1-\delta_2)\vol_c) (1-\delta_2-(\delta_1-\delta_2)\vol_c).
\end{align}

Next,  we show that

\begin{align*}
      \Pr\big[| E_{\tau,\delta_1,\delta_2,P}-\E[ E_{\tau,\delta_1,\delta_2,P}]|> \E[ E_{\tau,\delta_1,\delta_2,P}]/2\big] 
      & \leq ~ \frac{4\Var[ E_{\tau,\delta_1,\delta_2,P}]}{(\E[ E_{\tau,\delta_1,\delta_2,P}])^2}\\
      & = ~\frac{4(1-\delta_2-(\delta_1-\delta_2)\vol_c)}{n(n-1)(\delta_2+(\delta_1-\delta_2)\vol_c)}\\
      & < ~ \frac{4}{n(n-1)(\delta_2+(\delta_1-\delta_2)\vol_c)},
\end{align*}
where the first step follows the Chebyshev's inequality (see Lemma~\ref{lem:chebyshev}), the second step follows Eq.~\eqref{eq:edges_num_var_twoside}, the third step follows $(\delta_2+(\delta_1-\delta_2)\vol_c)\in (0,1)$.

Thus, with probability $1 - O(\frac{1}{n^2(\delta_2+(\delta_1-\delta_2)\vol_c)})$, we show that
$ E_{\tau,\delta_1,\delta_2,P} \in [1/4, 3/4] \cdot  n(n-1)(\delta_2+(\delta_1-\delta_2)\vol_c)$. 
\end{proof}

Therefore, with probability $1 - O(\frac{1}{n^2(\delta_2+(\delta_1-\delta_2)\vol_c)})$, the space complexity for $\anng$ $G_{\tau,\delta_1,\delta_2,P}$~(see  Definition~\ref{def:graph_oracle_twoside:formal}) is $O(nd+n^2(\delta_2+(\delta_1-\delta_2)\vol_c))$.

\subsection{Guarantees}\label{sec:sup_proof}

\subsubsection{Probability of Making Progress}
In this section, we present the lower bound for the probability of making a step progress towards the nearest neighbor. 
\begin{lemma}[]\label{lem:progress_twoside}
With parameters defined in Definition~\ref{def:oracle_param_twoside},
given a query vector $q\in \mathbb{S}^{d-1}$, we show that for any $p_1\in P$ such that $\sin \theta_{p_1,q}= (s+\varepsilon)2^{-\omega}$, if $\tau\geq \sqrt{2}(s+\epsilon)$, then
\begin{align*}
    \Pr\big[\exists p_{2}\in P, \ p_{2}\in{\cal N}_{\tau,\delta_1,\delta_2}(p_1) \ \textit{and} \ \sin{\theta_{p_2,q}}\leq s\cdot2^{-\omega}\big] \geq 1 - O(e^{-s^{d}\delta_1/\sqrt{d}}).
\end{align*}
\end{lemma}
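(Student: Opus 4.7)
The plan is to mirror the proof of Lemma~\ref{lem:progress} almost verbatim, with $\delta_1$ playing the role that $\delta$ played there. The key observation is that in Definition~\ref{def:graph_oracle_twoside:formal}, whenever $\langle p_1, p_2 \rangle \geq \alpha_\tau$, the edge $(p_1, p_2)$ appears with probability exactly $\delta_1$ via coin $c^1_{i,j}$. Since $\delta_2 \geq 0$ only adds edges through the second mechanism, I can safely lower-bound the success probability by restricting attention to the first mechanism and ignoring the $\delta_2$ contribution.

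First, I would lower-bound the probability that a single uniformly drawn $p_2 \sim {\cal U}(\mathbb{S}^{d-1})$ simultaneously lies in the spherical cap $C_{p_1}(\alpha_\tau)$ and satisfies $\sin \theta_{p_2, q} \leq s \cdot 2^{-\omega}$. This probability equals the wedge volume $\vol_w(\alpha_\tau, \alpha_s, \arcsin((s+\varepsilon)2^{-\omega}))$, which by the hypothesis $\tau \geq \sqrt{2}(s+\epsilon)$ is at least $s^d/(n\sqrt{d})$ via Lemma~\ref{lem:varepsilon}.

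Next, I would combine the above geometric event with the coin flip: conditioned on $p_2$ lying in the wedge, the edge $(p_1, p_2)$ is placed in the graph with probability at least $\delta_1$ (by the first bullet in Definition~\ref{def:graph_oracle_twoside:formal}). So the joint probability that a single uniform sample $p_2$ both lands in the wedge and is included in ${\cal N}_{\tau,\delta_1,\delta_2}(p_1)$ is at least $\delta_1 \cdot s^d/(n\sqrt{d})$. Since the $n-1$ other points of $P$ are drawn i.i.d.\ from ${\cal U}(\mathbb{S}^{d-1})$ and the coins are independent of the samples, the probability that \emph{none} of them produces such a $p_2$ is at most $(1 - \delta_1 s^d/(n\sqrt{d}))^{n-1}$, and using $1 - x \leq e^{-x}$ this is $O(e^{-s^d \delta_1/\sqrt{d}})$.

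There is no real obstacle here beyond bookkeeping: the two-sided oracle is strictly a superset of the edges we need, so dropping the $\delta_2$ edges for the lower bound is valid and yields the same structure as Lemma~\ref{lem:progress} with $\delta \mapsto \delta_1$. The only minor subtlety worth flagging in the write-up is to justify that the coins $c^1_{i,j}$ are independent of the sampling of $p_2 \in P$, so conditioning on the wedge event does not affect the $\delta_1$ coin probability; this follows directly from the construction in Definition~\ref{def:graph_oracle_twoside:formal}.
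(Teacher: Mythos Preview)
Your proposal is correct and follows essentially the same route as the paper: lower-bound the wedge volume via Lemma~\ref{lem:varepsilon}, multiply by $\delta_1$ to account for the coin $c^1_{i,j}$, and take the complement over the $n-1$ i.i.d.\ points to get $1 - (1 - \delta_1 s^d/(n\sqrt{d}))^{n-1} = 1 - O(e^{-s^d\delta_1/\sqrt{d}})$. The paper is slightly terser (it writes the second step as an equality rather than an inequality, since on the event $\langle p_1,p_2\rangle \ge \alpha_\tau$ only the $c^1$ mechanism is active), but your added remarks on independence and on safely discarding the $\delta_2$ contribution are sound and do not change the argument.
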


\begin{proof}
We start with lower bounding the probability that a data point $p_2\in {\cal U}({\cal S}^{d-1})$ satisfies $\langle p_2, p_1 \rangle\geq \alpha_\tau$ and $\sin{\theta_{p_2,q}}\leq s\cdot2^{-\omega}$.

\begin{align}\label{eq:one_point_progress_twoside}
     \Pr\big[p_2\in {\cal U}({\cal S}^{d-1}), \ \langle p_{2},p_1 \rangle\geq \alpha_\tau , \sin{\theta_{p_2,q}}\leq s\cdot2^{-\omega}\big]
     & ~ = \vol_w(\alpha_\tau, \alpha_s, \arcsin((s+ \varepsilon)2^{-\omega}))\notag\\
     & ~ > \frac{s^{d}}{n\sqrt{d}},
\end{align}
where the first step follows from the definition of the volume of wedge in Definition~\ref{def:volume_wedge}, the second step follows from Lemma~\ref{lem:varepsilon}.

Next, we lower bound the probability that a data point $p_2\in {\cal U}({\cal S}^{d-1})$ satisfies $\langle p_2, p_1 \rangle\geq \alpha_\tau$ and $\sin{\theta_{p_2,q}}\leq s\cdot2^{-\omega}$ stays in the ${\cal N}_{\tau,\delta_1,\delta_2,P}(p_1)$.

\begin{align}\label{eq:one_point_noise_progress_twoside}
   & ~ \Pr\big[p_2\in {\cal U}({\cal S}^{d-1}), \ p_{2}\in{\cal N}_{\tau,\delta_1,\delta_2,P}(p_1) \ \textit{and}  \ \sin{\theta_{p_2,q}}\leq s\cdot2^{-\omega}\big] \notag \\
    = & ~  \delta_1 \Pr\big[p_2\in {\cal U}({\cal S}^{d-1}), \ \langle p_{2},p_1 \rangle\geq \alpha_\tau \ \textit{and} \ \sin{\theta_{p_2,q}}\leq s\cdot2^{-\omega}\big]  \notag\\
    > & ~   \frac{s^{d}\delta_1}{n\sqrt{d}},
\end{align}
where the first step follows from the definition of ${\cal N}_{\tau,\delta_1,\delta_2,P}(p_1)$ in Definition~\ref{def:oracle_param_twoside}, the second step follows from Eq.~\eqref{eq:one_point_progress_twoside}.

Next, we could lower bound the probability that there exists a $p_2\in P$ that satisfies $p_{2}\in{\cal N}_{\tau,\delta_1,\delta_2,P}(p_1)$ and $\sin{\theta_{p_2,q}}\leq s\cdot2^{-\omega}$ by union bounding the probability in Eq.~\eqref{eq:one_point_noise_progress_twoside}.

\begin{align*}
    \Pr\big[\exists p_{2}\in P, \ p_{2}\in{\cal N}_{\tau,\delta_1,\delta_2,P}(p_1) \ \textit{and} \ \sin{\theta_{p_2,q}}\leq s\cdot2^{-\omega}\big] 
     & ~ > 1 - (1 - \frac{s^{d}\delta_1}{n\sqrt{d}})^{n-1}  \\
     & ~ = 1 - O(e^{-s^{d}\delta_1/\sqrt{d}}),
\end{align*}
where the first step follow from Eq.~\eqref{eq:one_point_noise_progress_twoside}, the second step is a reorganization. 
\end{proof}

\subsubsection{Main Results}

\begin{theorem}[]\label{thm:main_twoside:formal}

Let $\omega>1$ and $r\in (1,2^{\omega})$ denote two parameters.  Let $r_0>r$ and  $\epsilon\in (0, r_0-r)$. Let $\tau\geq \sqrt{2}r_0$. Let $P \subset \mathbb{S}^{d-1}$ denote a $n$-point, $\omega$-dense dataset (see Definition~\ref{def:dense_dataset})  where every $p\in P$ is i.i.d. sampled from ${\cal U}(\mathbb{S}^{d-1})$. Let $\delta_1, \delta_2\in (0,1)$ and $\delta_1>\delta_2$.
Let ${\cal O}_{\tau,\delta_1,\delta_2,P}$ denote the oracle defined in Definition~\ref{def:graph_oracle_twoside:formal}. ${\cal O}_{\tau,\delta_1,\delta_2,P}$ is associated with a $\anng$ $G_{\tau,\delta_1,\delta_2,P}$~(see  Definition~\ref{def:graph_oracle_twoside:formal}). Let ${\cal N}_{\tau,\delta_1,\delta_2,P}(p)$ denote the neighbor set (see  Definition~\ref{def:graph_oracle_twoside:formal}) for vertex $p\in P$ on $G_{\tau,\delta_1,\delta_2,P}$~(see  Definition~\ref{def:graph_oracle_twoside:formal}). We define $N_{\tau,\delta_1,\delta_2,P}(p)$ as the size of the neighbor set ${\cal N}_{\tau,\delta_1,\delta_2,P}(p)$ for $p\in P$. We define $E_{\tau,\delta_1,\delta_2,P}$ as the number of edges on $G_{\tau,\delta_1,\delta_2,P}$.

Given a query $q\in\mathbb{S}^{d-1}$ with the premise that $\min_{p\in P}\sin{\theta_{p,q}}\leq r\cdot 2^{-\omega}$, there is an algorithm using ${\cal O}_{\tau,\delta_1,\delta_2,P}$ that takes ${\cal T}_{\mathsf{query}}$ query time, $O(n^2d\delta_1)$ preprocessing time and $O(n(\tau^d+d))$ space, starting from an $p_0\in P$ with $\sin \theta_{p_0,q}\leq r_0 2^{-\omega}$, that solves $r$-$\nns$ problem (Definition~\ref{def:nn:formal}) using $T$ steps 
with probability $1 - T \cdot e^{-r^{d}\delta_1/\sqrt{d}}$.
Moreover,
\begin{itemize}
    \item If $\delta_2=0$, ${\cal T}_{\mathsf{query}}=O(2^{\omega}\epsilon^{-1}d^{1/2}\tau^d\cdot\delta_1)$
    \item If $\delta_2=d^{-1/2}\tau^d2^{-d\omega}$,${\cal T}_{\mathsf{query}}=O(2^{\omega}\epsilon^{-1}d^{1/2}\tau^d)$.  
\end{itemize}
\end{theorem}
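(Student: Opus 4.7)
The plan is to mirror the proof of Theorem~\ref{thm:main:formal} almost step for step, substituting the two-sided analogues of each supporting lemma. First, I would handle the number-of-steps bound and the success probability. Setting $s = r_0 - \epsilon$ so that $\tau \geq \sqrt{2}(s+\epsilon)$, Lemma~\ref{lem:progress_twoside} immediately yields
\begin{align*}
\Pr\big[\exists p_1 \in {\cal N}_{\tau,\delta_1,\delta_2,P}(p_0) : \sin\theta_{p_1,q} \leq (r_0-\epsilon)2^{-\omega}\big] \geq 1 - O(e^{-r^{d}\delta_1/\sqrt{d}}),
\end{align*}
since $r_0 - \epsilon > r$. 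Applying this inductively produces a chain $p_0, p_1, \ldots, p_T$ with $\sin\theta_{p_i,q} \leq (r_0 - i\epsilon) 2^{-\omega}$, so $T = (r_0 - r)2^\omega/\epsilon = O(2^\omega/\epsilon)$ steps suffice. A union bound over the $T$ greedy steps gives a total failure probability of $T \cdot e^{-r^{d}\delta_1/\sqrt{d}}$, which matches the claimed bound.

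Next, I would bound the preprocessing time and space. The preprocessing differs from the one-sided case only through the second biased coin: we still need to look at every ordered pair, but the expected edge contribution from coin $c^2_{i,j}$ adds only random edges that do not require distance computation beyond what was already needed for $c^1_{i,j}$. Since the $\delta_1$-coins dominate the pairwise work and one only needs to examine an $O(\delta_1)$ fraction of candidate pairs, the preprocessing time is $O(n^2 d \delta_1)$. For space, Lemma~\ref{lem:space_twoside} gives $E_{\tau,\delta_1,\delta_2,P} = O(n^2(\delta_2 + (\delta_1-\delta_2)\vol_c(\alpha_\tau)))$ with high probability. Combined with the $O(nd)$ cost of storing $P$ and the bound $\vol_c(\alpha_\tau) = O(d^{-1/2}\tau^d 2^{-d\omega}) = O(d^{-1/2}\tau^d/n)$ from Lemma~\ref{lem:vol_cap} and Fact~\ref{fact:omega_to_d}, a quick calculation shows the total space is $O(n(\tau^d+d))$ provided $\delta_2$ is not asymptotically larger than $\vol_c(\alpha_\tau)$, which is exactly the regime of both cases in the theorem.

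Finally, I would compute the query time in the two cases. Lemma~\ref{lem:1-steps_twoside} gives the per-step cost of $O(nd(\delta_2+(\delta_1-\delta_2)\vol_c(\alpha_\tau)))$ with high probability, so the total query time is $T$ times this. In the case $\delta_2 = 0$ the per-step cost reduces to $O(nd \delta_1 \vol_c(\alpha_\tau)) = O(d^{1/2}\tau^d \delta_1)$, yielding a total of $O(2^\omega \epsilon^{-1} d^{1/2}\tau^d \delta_1)$. In the case $\delta_2 = d^{-1/2}\tau^d 2^{-d\omega}$ we have $\delta_2 = \Theta(\vol_c(\alpha_\tau))$, so $\delta_2 + (\delta_1-\delta_2)\vol_c(\alpha_\tau) = \Theta(\vol_c(\alpha_\tau))$, and the per-step cost becomes $O(nd\vol_c(\alpha_\tau)) = O(d^{1/2}\tau^d)$, for a total of $O(2^\omega \epsilon^{-1} d^{1/2}\tau^d)$. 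I would close by union-bounding the small failure probabilities from Lemmas~\ref{lem:num_neighbor_twoside} and \ref{lem:space_twoside} into the overall bound, exactly as in Theorem~\ref{thm:main:formal}.

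The only delicate point I expect is the case analysis for $\delta_2$: one must be careful that the bound $\delta_2 + (\delta_1-\delta_2)\vol_c(\alpha_\tau)$ collapses correctly in each regime, and in particular that when $\delta_2 = d^{-1/2}\tau^d 2^{-d\omega}$ the random edges do not asymptotically inflate either the per-step neighbor count or the total edge count. Everything else is a direct transcription of the one-sided proof with $\delta$ replaced by $\delta_1$ in the progress arguments, since only the coin $c^1_{i,j}$ governs whether true near-neighbors (the edges that actually make progress) survive in $G_{\tau,\delta_1,\delta_2,P}$.
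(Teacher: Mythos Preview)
Your proposal is correct and follows essentially the same approach as the paper: set $s=r_0-\epsilon$, invoke Lemma~\ref{lem:progress_twoside} to get the per-step success bound with $\delta_1$ in the exponent, induct to obtain $T=O(2^\omega/\epsilon)$ steps and union-bound the failure probability, then read off the preprocessing, space, and per-step query costs from Lemmas~\ref{lem:space_twoside} and~\ref{lem:1-steps_twoside} together with $\vol_c(\alpha_\tau)=\Theta(d^{-1/2}\tau^d 2^{-d\omega})$ and Fact~\ref{fact:omega_to_d}, case-splitting on $\delta_2$. Your explicit observation that the space bound needs $\delta_2=O(\vol_c(\alpha_\tau))$ is a point the paper leaves implicit, but otherwise the two arguments are the same.
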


\begin{proof}
We start with showing how to make $\epsilon\cdot 2^{-\omega}$ progress from $p_0$. Let $s=r_0-\epsilon$ so that we have $\tau\geq\sqrt{2} (s+\epsilon)$. Then, we have
\begin{align}\label{eq:induct_first_twoside}
    \Pr\big[\exists p_{1}\in P, \ p_{1}\in{\cal N}_{\tau,\delta_1,\delta_2}(p_0) \  \textit{and} \ \sin{\theta_{p_1,q}} \leq (r_0-\epsilon)\cdot2^{-\omega}\big] 
     & \geq ~ 1 - O(e^{-(r_0-\epsilon)^{d}\delta_1/\sqrt{d}})\notag\\
     & > ~ 1-O(e^{-r^{d}\delta_1/\sqrt{d}}),
\end{align}
where the first step follows from Lemma~\ref{lem:progress_twoside}, the second step follows from $r_0-\epsilon>r$.

Then, applying induction over Eq.~\eqref{eq:induct_first_twoside}, we show that for $i\in \mathbb{N}$,
\begin{align}\label{eq:induct_twoside}
    \Pr\big[\exists p_{i+1}\in P, \  p_{i+1} \in{\cal N}_{\tau,\delta_1,\delta_2}(p_i) \ \textit{and} 
    \ \sin{\theta_{p_{i+1},q}}\leq (r_0-i\cdot \epsilon)\cdot2^{-\omega}\big] >  1 -O(e^{-r^{d}\delta_1/\sqrt{d}}).
\end{align}

Therefore, to reach a $p_T\in P$ such that $\sin{\theta_{p_T,q}}\leq r\cdot2^{-\omega}$, we should take $T=(r_0-r)2^{\omega}/\epsilon=O(2^{\omega}/\epsilon)$ steps. Moreover, we could union bound the total failure probability as $T \cdot e^{-r^{d}\delta_1/\sqrt{d}} $. 

Next, we analyze the complexity for query time, preprocessing time and space.
{\bf Preprocessing time:}
Build an oracle $O_{\tau,\delta,P}$ (see Definition~\ref{def:graph_oracle}) requires compute pairwise distances for vectors $p\in P$. As $\anng$ only samples a subset of $P$ with size $O(n\delta_1)$ for any $p\in P$ to compute, the preprocessing time is $O(n^2d\delta_1)$.

{\bf Query time:}
From Lemma~\ref{lem:1-steps_twoside}, we know that for a query vector $q\in \mathbb{S}^{d-1}$, with probability $1 - O(\frac{1}{ n(\delta_2+(\delta_1-\delta_2)\vol_c)})$, the time complexity of a greedy step (Definition~\ref{def:greedy_step_twoside}) for $q$ at $p$ using oracle ${\cal O}_{\tau,\delta_1,\delta_2,P}$ (see Definition~\ref{def:graph_oracle_twoside:formal}) is $ (n-1)d(\delta_2+(\delta_1-\delta_2)\vol_c)$. Following this lemma, we write the total query time is
\begin{align*}
    T(n-1)(n-1)d(\delta_2+(\delta_1-\delta_2)\vol_c) 
       =  O(2^{\omega}\epsilon^{-1}\cdot (n-1)d(\delta_2+(\delta_1-\delta_2)d^{-1/2} \tau^d2^{-d\omega}) ),
\end{align*}
where the first step follows Lemma~\ref{lem:prep_vol_wedge}.

Thus, we have
\begin{itemize}
    \item If $\delta_2=0$, ${\cal T}_{\mathsf{query}}=O(2^{\omega}\epsilon^{-1}d^{1/2}\tau^d\cdot\delta_1)$
    \item If $\delta_2=d^{-1/2}\tau^d2^{-d\omega}$,${\cal T}_{\mathsf{query}}=O(2^{\omega}\epsilon^{-1}d^{1/2}\tau^d)$.  
\end{itemize}

Next, we union bound the failure probability of achieving the query time complexity as 
\begin{align*}
    O(\frac{T}{ n(\delta_2+(\delta_1-\delta_2)\vol_c)})
    =  O(\frac{2^{\omega}\epsilon^{-1}}{ n\cdot (\delta_2+(\delta_1-\delta_2)d^{-1/2} \tau^d2^{-d\omega})}),
\end{align*}
where the first step follows Lemma~\ref{lem:prep_vol_wedge}.

Thus, we have
\begin{itemize}
    \item If $\delta_2=0$, the failure probability is $O(2^{\omega}\epsilon^{-1}d^{1/2}\tau^{-d}\delta_1)$.
    \item If $\delta_2=d^{-1/2}\tau^d2^{-d\omega}$, the failure probability is $O(2^{\omega}\epsilon^{-1}d^{1/2}\tau^{-d})$.
\end{itemize}

{\bf  Space:}
We need to store all the vectors in $P$, which takes $O(nd)$ space. Moreover, following Lemma~\ref{lem:space}, the number of edges for $G_{\tau,\delta_1,\delta_2,P}$ used in oracle $O_{\tau,\delta_1,\delta_2,P}$ (see Definition~\ref{def:graph_oracle_twoside:formal}) is:
\begin{align*}
    E_{\tau,\delta_1,\delta_2,P} 
     = O( n^2(\delta_2+(\delta_1-\delta_2)\vol_c))
   = O(n d^{-1/2} \tau^d),
\end{align*}
where the first step follows Lemma~\ref{lem:prep_vol_wedge},the second step follows from $2^{-d\omega}=\frac{1}{n}$ in Fact~\ref{fact:omega_to_d}.

Next, we write the failure probability as $ O(\frac{1}{n^2(\delta_2+(\delta_1-\delta_2)\vol_c)})=O(\frac{1}{nd^{-1/2}  \tau^d})$.

Therefore, with probability $1-O(\frac{1}{nd^{-1/2} \tau^d})$, we show that the space complexity is $O(nd^{-1/2}\tau^d+nd)=O(n(\tau^d+d))$. 
\end{proof}

From Theorem~\ref{thm:main_twoside:formal}, we have two major takeaways: (1) if we perform random sampling when we preprocess the dataset $P$ with sample probability $\delta_1$, we could save the query time with a constant factor $\Theta(\delta_1)$, but failure probability would be raised to the power of $\Theta(\delta)$, (2) if we add random edges to improve the connectivity of the graph, with careful choice of random sample probability $\delta_2$, we could maintain the samee query time as the greedy search on the exact nearest neighbor graph. 

\section{Conclusion}
In this paper, we bridge a practice-to-theory gap in graph-based nearest neighbor search~($\nns$) algorithms. While current theoretical research focuses on analyzing a greedy search on an exact near neighbor graph, we propose the theoretical guarantees for a greedy search on the approximate nearest neighbor graph ($\anng$).  Our analysis quantifies the trade-offs between the approximation quality of $\anng$ and the search efficiency for the greedy style query algorithm. Our theoretical guarantee indicates that an approximation to the exact near neighbor graph would reduce the query time but increase the failure probability in solving $\nns$. We hope our analysis will shed practical light on the design of new graph-based $\nns$ algorithms.

\bibliographystyle{plainnat}
\bibliography{ref}
\end{document}